\newcommand{\learner}{\mathcal{L}}
\newcommand{\mO}{\mathcal{O}}
\newcommand{\mS}{{\mO_{low}}}
\newcommand{\mF}{{\mO_{high}}}
\newcommand{\cS}{{S^{}_\learner}}
\newcommand{\sL}{{\mathbb{L}}}
\newcommand{\sE}{{\mathbb{E}}}
\newcommand{\sM}{{\mathbb{M}}}
\newcommand{\sU}{{\mathbb{U}}}
\newcommand{\sX}{{\mathbb{X}}}
\newcommand{\sY}{{\mathbb{Y}}}
\newcommand{\tQ}{{\textit{Q}}}
\newcommand\norm[1]{\left\lVert#1\right\rVert}
\newcommand{\charac}[1]{{\mathds{1}_{#1}}}%{\mathbbm{1}
\newcommand{\probcover}{\emph{ProbCover}}
\newcommand{\MethodName}{\emph{DCoM}}
\newcommand{\app}{{Appendix~}}
\definecolor{orange}{rgb}{1, 0.75, 0.35}
\definecolor{blue}{rgb}{0.36, 0.54, 1}
\definecolor{red}{rgb}{1, 0.54, 0.36}
\definecolor{green}{rgb}{0.2, 0.7, 0.7}
\definecolor{pink}{rgb}{1.0, 0.4, 0.4}
\newcommand{\myparagraph}[1]{\smallskip\noindent\textbf{#1}}
\theoremstyle{definition}
\newtheorem{definition}{Definition}[section]
\newtheorem*{lemma*}{Lemma}
\newtheorem*{proposition}{Proposition}
\newtheorem*{theorem*}{Theorem}
\newtheorem*{corollary*}{Corollary}
\title{\MethodName: Competence-Driven Adaptive Active Learning}
\author {
    % Authors
    Inbal Mishal\textsuperscript{\ensuremath\dagger}, 
    Daphna Weinshall\textsuperscript{\ensuremath\dagger}
    % First Author Name\textsuperscript{\rm 1},
    % Second Author Name\textsuperscript{\rm 2},
    % Third Author Name\textsuperscript{\rm 1}
}
\begin{document}

\maketitle

\begin{abstract}
Deep Active Learning techniques can be effective in reducing annotation costs for training deep models. However, their effectiveness in low- and high-budget scenarios seems to require different strategies, and achieving optimal results across varying budget scenarios remains a challenge.   In this study, we introduce Dynamic Coverage \& Margin mix ($\MethodName$), a novel active learning approach designed to bridge this gap. Unlike existing strategies, \MethodName\ dynamically adjusts its strategy, while taking into account the competence of the current model. Through theoretical analysis and empirical evaluations on diverse datasets, including challenging computer vision tasks, we demonstrate $\MethodName$'s ability to overcome the cold start problem and consistently improve results across different budgetary constraints. Thus \MethodName\ achieves state-of-the-art performance in both low- and high-budget regimes. %Code is available at \href{https://github.com/avihu111/TypiClust}{https://github.com/avihu111/TypiClust}.
\end{abstract}

\section{Introduction}
\label{sec:intro}
Deep Learning (DL) algorithms require large amounts of data to achieve optimal results. In some situations, there is an abundance of unlabeled data but limited capacity to label it. In fields such as medical imaging, an invaluable resource — doctors themselves — serves as a costly oracle. 

Active learning (AL) algorithms aim to address this challenge by reducing the labeling burden and enhancing its effectiveness (see illustration in Fig.~\ref{fig:AL_illustration}). Unlike traditional supervised learning frameworks, AL can influence the construction of the labeled set, potentially by leveraging knowledge about the current learner. Accordingly, the initial goal of AL is to select $q$ examples to be annotated, where $q$ represents the number of examples that can be sent to the oracle. AL has already demonstrated tangible contributions across various domains such as computer vision tasks \citep[e.g.][]{yuan2023active}, NLP \citep[e.g.][]{siddhant2018deep, kaseb2023active} and medical imaging \citep[e.g.][]{s22031184}. These examples highlight the importance of advancing AL to achieve even greater impact.

\begin{figure}
    \centering
    \includegraphics[width=\columnwidth]{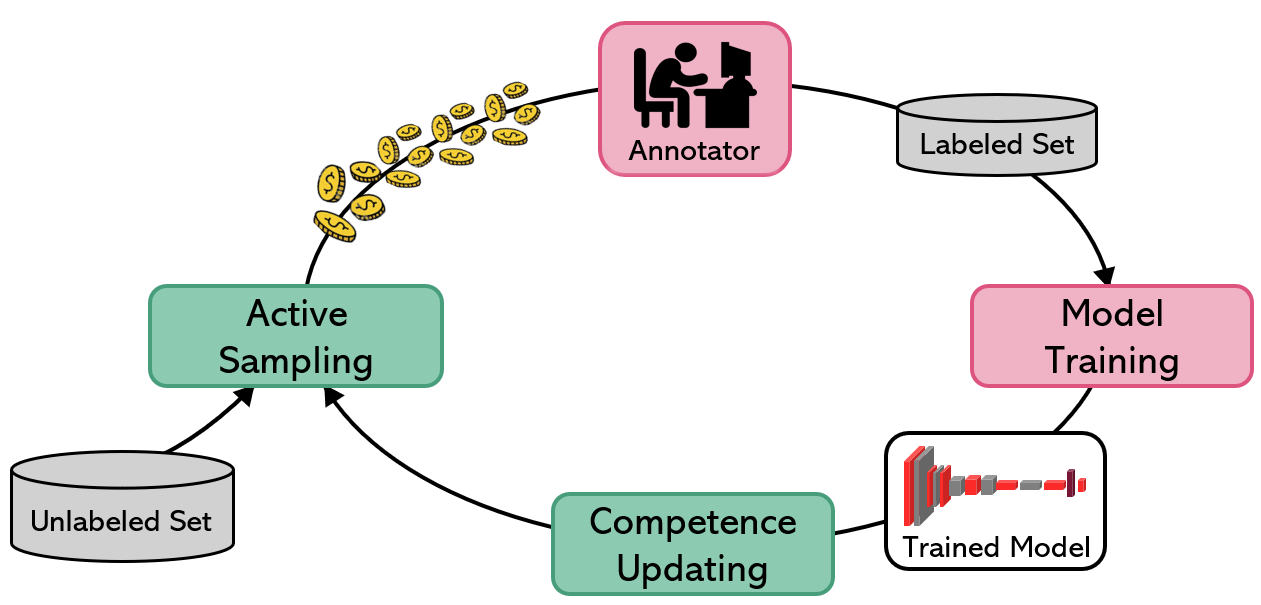}
    \caption{Illustration of the AL iterative process: \MethodName\ introduces an additional step to estimate the current learner's competence, which is then utilized during the active sampling phase.}
    \label{fig:AL_illustration}
    \vspace{-.3cm}
\end{figure}

The choice of an active learning (AL) strategy depends on both the learner's inductive biases and the nature of the problem. Recent research indicates that the optimal AL strategy varies with the budget size, where the budget refers to the size of the training set. For larger budgets, uncertainty and diversity sampling methods are most effective, while for smaller budgets, strategies focused on typicality and diversity are more suitable. However, no single AL strategy is universally optimal across all budget levels. Our study addresses this challenge by dynamically selecting the best examples based on the learner's current state and the available budget. 

Recent research, as briefly reviewed below, often categorizes active sampling methods based on budget regimes. In contrast, our study takes a different approach by emphasizing the need to adapt sample selection methods according to the learner's competence. We aim to develop a single algorithm that dynamically adjusts its sampling strategy in response to the learner's evolving abilities. This shift from a budget-focused to a competence-driven approach represents a significant advancement in active learning research, allowing for a more adaptive strategy that optimizes performance based on measurable learner's capabilities.

%In this work, we introduce a novel method called Dynamic Coverage \& Margin mix (\MethodName), which dynamically adjusts its selection strategy to deliver optimal results. We begin by examining a simplified theoretical framework (Section~\ref{sec:theory}), where we integrate a typicality test with an uncertainty test based on the budget and its coverage. Inspired by the insights gained from this analysis, we introduce \MethodName\ (Section~\ref{sec:method}), which combines these characteristics. \MethodName\ aims to offer a flexible solution for any budget by selecting the most appropriate examples that are most suitable to the current budget. We validate \MethodName\ through an extensive empirical study using various vision datasets (Section~\ref{sec:results}). Our findings indicate that \MethodName\ effectively achieves superior performance across all budget ranges.

In this work, we introduce a novel method called Dynamic Coverage \& Margin Mix (\MethodName), which dynamically adjusts its selection strategy by minimizing a new loss function that incorporates a measure of competence. We start by exploring a simplified theoretical framework (Section~\ref{sec:theory}), where we integrate a typicality test with an uncertainty test based on the budget and a notion of coverage. Building on the insights from this analysis, we introduce \MethodName\ (Section~\ref{sec:method}), which combines these elements to provide a flexible solution for any budget. Thus, \MethodName\ selects the most appropriate examples suited to the current budget. We validate \MethodName\ through an extensive empirical study using various vision datasets (Section~\ref{sec:results}), demonstrating that \MethodName\ consistently achieves superior performance across all budget ranges.

\myparagraph{Relation to prior art.}
Over the past years, active learning has been an active area of research \citep{wang2023comprehensive, settles2009active, cohn1996active, tharwat2023survey, xie2023active, schmidt2024unifiedapproachactivelearning}.
The realm of Active Learning (AL) is segmented into three problem settings: membership query synthesis, stream-based selective sampling, and pool-based active learning \citep{settles2009active}. Much of the recent AL work follows the pool-based setting, where samples are selected from a large pool of unlabeled data and annotations are obtained from an oracle. In this setting, AL algorithms operate within a given budget ($\mathit{b}$), selecting a subset of unlabeled examples to send to an oracle for labeling. This selection process aims to identify examples that will maximize the model's performance. This process may be repeated iteratively, gradually increasing budget $b$.

In discussions of pool-based AL, two key characteristics that play a crucial role during the active sampling process are \emph{uncertainty} and \emph{diversity}. Some AL algorithms focus solely on uncertainty sampling \citep{lewis1994sequential, shannon1948A, cho2023querying, zhao2021uncertaintyaware, houlsby2011bayesian, gissin2019discriminative, 8297020, woo2022active, gal2017deep, parvaneh2022active, jung2023simple}, while others prioritize diversity sampling \citep{sener2018active, hu2010off, yehuda2022active, hacohen2022active}. Other algorithms combine both uncertainty and diversity in their approaches \citep{yang2015multi, chen2024making, wen2023ntkcpl, kirsch2019batchbald, Chowdhury_2024_WACV, ash2019deep, kim2021task}.

Over time, the cold start problem has emerged, highlighting the difficulties faced by AL algorithms when operating with a small budget. It seems that  \emph{uncertainty} and \emph{diversity} are ill-suited to address this challenge. Accordingly, \citet{chen2024making} describe the cold start problem as a combination of two issues: unbalanced sampling across different classes (which relates to diversity). and unbalanced sampling within each class, where uncertain examples are favored over certain ones at the outset.

In low-budget scenarios, typical examples often offer the highest benefits for training the model \citep{hacohen2022active, yehuda2022active, chen2024making}. This implies that in such contexts, AL algorithms should aim to maximize both certainty and diversity. Conversely, in high-budget scenarios, the emphasis shifts towards maximizing uncertainty while maintaining diversity. While \citep{hacohen2024select} proposes a method to select the most suitable approach for the current state, the method involves the comparison of multiple active learning strategies at each state, rather than recommending a single universal approach.

\myparagraph{Summary of contribution.}  
\begin{inparaenum}[(i)] 
    \item Expand a theoretical framework to analyze Active Learning (AL) strategies in embedding spaces.
    \item Introduce \MethodName, a universal strategy that significantly outperforms previous methods in low and medium budget scenarios, and matches them in high budget scenarios.
    \item Initiate a transition from a budget-based discussion to the learner's competence, while introducing coverage as a measure designed to predict competence.
\end{inparaenum}

\section{Theoretical Framework}
\label{sec:theory}

%The framework of active learning adopted here can be formalized as follows: 
Let $\sX$ denote the input domain, $\sY$ the target domain, and $\learner: \sX\to\sY$ the trained learner. Before seeing any labels, $\learner$ may be a random feasible hypothesis, or it may be initialized by either transfer learning or self training. Let $b$ denote the size of the labeled training set, aka \emph{budget}.

Active learning may involve a single step, or an iterative process with repeated active learning steps. In each AL step, the learner actively seeks labels by choosing a set of unlabeled points as a query set, to be labeled by an oracle/teacher. Subsequently, this set of labeled points is added to the learner's supervised training set, and the learner is either retrained or fine-tuned. 

As discussed in Section~\ref{sec:intro}, it has been shown that different active learning strategies are suited for different budgets $b$. %Henceforth, $b$ refers to the number of labeled examples known to the learner at the beginning of an active learning step. 
With a low budget $b$, strategies that do not rely on the outcome of the learner are most suitable. When $b$ is high, it is beneficial to consider the confidence of the learner when choosing an effective query set. What makes a certain budget $b$ high or low is left vague in previous work, as it depends on both the hypothesis space (or architecture) and the dataset. 

To achieve an effective active learning protocol suitable for all learners, irrespective of budget, we aim to devise a universal objective function, whose minimization is used to select the query set. %Next, we discuss its envisioned form for selecting the query set at the beginning of each AL step.
We begin by proposing to replace the notion of \emph{budget} with the notion of \emph{competence}, designed to track the generalization ability of $\learner$ when trained with budget $b$. The notion of competence, we hypothesize, is more universal and less dependent on the type of learner and specific dataset (see discussion in Section~\ref{sec:AL-mixed}). 

We then rephrase the intuition stated above (see Section~\ref{sec:intro}) as follows: When the learner's competence is low, the objective function should prioritize typicality and diversity of selected queries irrespective of the learner's predictions. When the learner's competence is high, its uncertainty in prediction should be given high priority.

More formally, let ${\mO(x)}$ denote the desired objective function for query selection, where $x \in \sX$ is unlabeled. Let ${\mS(x)}$ and ${\mF(x)}$ denote the objective functions suitable for a learner with low competence and high competence respectively. Let $\cS$ denote a score, which captures the competence of learner $\learner$. The proposed objective function can now be written as follows $\forall x \in \sX$:
\begin{equation}
\label{eq:obj_func_def}
{\mO}(x) = (1-\cS)\cdot{{\mS}(x)} + \cS\cdot{{\mF}(x)}
\end{equation}

After the introduction of necessary preliminaries in Section~\ref{sec:prelim} we discuss in Section~\ref{sec:new_bound_generalization_error} the design of $\mS(x)$. We begin with the point coverage framework introduced by \citet{yehuda2022active}. This approach relies on self-supervised data representation, which is blind to the learner's performance. We refine their analysis and obtain an improved generalization bound for the Nearest Neighbor classification model that depends on the local geometry of the data.

Next, in Section~\ref{sec:AL-mixed} we discuss our choice of $\cS$, and connect it to the notion of budget discussed in previous work. Finally, we select $\mF(x)$ to equal one minus the normalized lowest response between the two highest softmax outputs of $\learner$ at $x$, or \emph{Margin}, as this is a common uncertainty measure.

\subsection{Preliminaries}
\label{sec:prelim}

\myparagraph{Notations.}
Let $P$ denote the underlying probability distribution of data $\sX$. Assume that a true labeling function $f:\sX \to \sY$ exists. Let $\sU\subseteq \sX$ denote the unlabeled set of points, and $\sL \subseteq \sX$ the labeled set, such that $\sX=\sU \cup \sL$. Here $\left|\sL\right|=b \le m$ is the annotation budget where $\left|\sX\right|=m$.

Let $B_{\delta}\left(x\right)=\left\{ x'\in\sX:\norm{x'-x}_{2}\le\delta\right\}$ denote a ball of radius $\delta$ centered at $x$. Let $C\equiv C(\sL,\delta)=\bigcup_{x\in \sL} B_\delta(x)$ denote the region covered by $\delta$-balls centered at the labeled examples in $\sL$. We call $C(\sL,\delta)$ the \emph{covered region}, where $P\left(C\right)$ denotes its probability. Let $f^{N}$ denote the 1-NN classifier, and $\learner$ denote our current learner -- a 1-NN classifier based on $\sL$.
\medskip

\begin{definition}
We say that a ball $B_\delta(x)$ is \emph{pure} if $\forall x'\in B_\delta(x): f(x')=f(x)$. 
\end{definition}

\begin{definition}
We define the \emph{purity} of $\delta$ as 
$$\pi\left(\delta\right)=P\left(\left\{ x \in \sX:B_{\delta}\left(x\right)\text{ is pure}\right\} \right).$$ 
Note that $\pi(\delta)$ is monotonically decreasing, as can be readily verified.
\end{definition}

In \citep{yehuda2022active} it is shown that the generalization error of the 1-NN classifier $f^{N}$ is bounded as follows
\begin{equation}
\label{eq:old_bound}
 \sE \left[f^{N}(x)\neq f(x)\right]\leq \left(1-P[C(\sL,\delta)]\right)+(1-\pi(\delta)).
\end{equation}
%\citep{yehuda2022active} theoretical analysis focuses on the low budget regime, so that it claims that $\pi(\delta)$ is negligible. This claim is not entirely right for the low budget analysis, but it even more incorrect when creating a multi-budget analysis. In the following analysis, we approximate a lower bound for $\pi(\delta)$. The new bound creates new and more accurate bound of the 1-NN generalization error.
Subsequently, an algorithm is proposed that minimizes the first term in (\ref{eq:old_bound}) by maximizing the coverage probability $P[C(\sL,\delta)]$, while ignoring the second term that is assumed to be fixed.

Below, we begin by refining the bound, focusing more closely on its second term $(1-\pi(\delta))$. This is used in Section~\ref{sec:method} to device an improved algorithm that minimizes simultaneously both terms of the refined error bound.

\subsection{Refined error bound}
\label{sec:new_bound_generalization_error}

Define the indicator random variable $I_\delta(x)$ as follows:
\begin{equation*}
I_\delta(x)=\charac{\{x\in \sX: B_{\delta}(x)\text{ is pure}\}}.
\end{equation*}
By definition, 
\begin{equation*}
\pi\left(\delta\right)=P\left(\left\{ x \in \sX:B_{\delta}\left(x\right)\text{ is pure}\right\} \right)=\sE[I_\delta(x)].
\end{equation*}

Since the distribution of $\sX$ is not known apriori, we use the \emph{empirical distribution} to approximate the expected value of this random variable. Thus, with labeled set $\sL=\left\{x_i\right\}_{i=1}^b$
\begin{equation*}
%\label{eq:mean-approx}
\begin{split}
    \hat{\pi}(\delta) = 
   \hat{\sE}[\charac{\{x\in \sL: B_{\delta}(x)\text{ is pure}\}}] =
    \frac1b\sE[\sum_{i=1}^b\charac{\{ B_{\delta}(x_i)\text{ is pure}\}}]\\ = \frac1b\sum_{i=1}^b\sE[\charac{\{ B_{\delta}(x_i)\text{ is pure}\}}]= \frac1b\sum_{i=1}^b P[ B_{\delta}(x_i)\text{ is pure}].
\end{split}
\end{equation*}
With this approximation 
\begin{equation}
\label{eq:new_bound}
\begin{split}
 \sE \left[f^{N}(x)\neq f(x)\right] \leq\Big[ 1-P[C(\sL,\delta)]\Big] \qquad\qquad  \\ + \Big[1-\frac1b\sum_{i=1}^b P[ B_{\delta}(x_i)\text{ is pure}]\Big]+\varepsilon.
\end{split}
\end{equation}
where $\varepsilon\underset{b\to\infty}{\to}0$ bounds the error of the approximation $\hat{\pi}(\delta)$. 

Note that the refined bound in (\ref{eq:new_bound}) depends on the purity separately at each labeled point in $\sL$. Next, we further refine this bound by allowing the fixed radius $\delta$ to be chosen individually, where $\delta_i$ denotes the radius of point $x_i\in\sL$. Let $\Delta=[\delta_i]_{i=1}^{b}$ denote the list of individual radii corresponding to $\sL=\{x_i\}_{i=1}^{b}$. The cover defined by $\Delta$ is $C(\sL, \Delta) = \bigcup_{(x_i,\delta_i)\in \sL\times\Delta} B_{\delta_i}(x_i)$.

%\inbal{Daphna – I think we shouldn’t change the entire analysis to nNN. Instead, we should just add justification for this transition. If we start with nNN, it will be less intuitive, and we’ll have to justify why ProbCover's analysis is still valid, explain the fine-tuned deltas in their proposed algorithm, and it will become really confusing. I added a reference to the transition justification. What do you think?}
%
With this choice, it is necessary to replace the 1-NN classifier with a suitable variant of nearest neighbor classification - the \emph{normalized Nearest Neighbor} (1-nNN) classifier. In 1-nNN, distances to point $x_i\in\sL$ are normalized by the corresponding $\delta_i$, after which the nearest neighbor $f^{N}(x) = \text{argmin}_{x_i\in\sL} \frac{d(x,x_i)}{\delta_i}$ is computed. This transition preserves the original problem settings, and guarantees that if a labeled point $c\in\sL$ assigns labels to other points, it also covers them (see \app\ref{app:nNN_explanation}). Consequently, if a point $x$ is mislabeled, there exists a $c\in\sL$ that covers it, leading to a reduction in purity. 
The bound on the error in (\ref{eq:new_bound}) becomes
\begin{equation}
\label{eq:local_bound}
\begin{split}
 \sE \left[f^{N}(x)\neq f(x)\right] \leq \Big[ 1-P[C(\sL,\Delta)]\Big] \qquad\qquad  \\+\Big[1-\frac1b\sum_{i=1}^b P[ B_{\delta_i}(x_i)\text{ is pure}]\Big]+\varepsilon.
\end{split}
\end{equation}
This bound describes a trade-off between probability coverage and purity, and guides the algorithm design in Section~\ref{sec:method}. Note that in (\ref{eq:local_bound}), the cover $P[ B_{\delta_i}(x_i)\text{ is pure}]$ is refined locally, using $B_{\delta_i}$ instead of $B_{\delta}$. 
%\inbal{Daphna – one of the reviewers didn’t understand the connection between the bound and the objective function, so I added this explanation. What do you think?} 
\subsection{Competence score $\boldsymbol{\cS}$}
\label{sec:AL-mixed}

In order to obtain a useful competence score\footnote{When the labeled set $\sL$ is large, it may be possible to set aside a validation set to directly estimate the competence of learner $\learner$, but this is not feasible when the labeled set $\sL$ is small.}, which can be effectively used in objective function (\ref{eq:obj_func_def}), we require that it satisfies the following conditions: (i) Lie in the range $[0, 1]$; (ii) Depend on the entire dataset; (iii) Monotonically increase with the competence of learner $\learner$.
% \begin{enumerate}[label=(\roman*)]
%     \item Lie in the range $[0, 1]$. \hfill
%     \item Depend on the entire dataset. \hfill
%     \item Monotonically increase with the competence of learner $\learner$.
% \end{enumerate}
In accordance, we propose to use a constrained Sigmoid function that depends on the probability of coverage:
\begin{equation}
\label{def:competence_score}
%\cS(P[C(\sL,\Delta)]) = \frac{1 + e^{-k(1-a)}}{1 + e^{-k(P[C(\sL,\Delta)]-a)}}
\cS(\sL, \Delta) = \frac{1 + e^{-k(1-a)}}{1 + e^{-k(P[C(\sL,\Delta)]-a)}}
\end{equation}

This score follows the sigmoid curve, where $a$ denotes its midpoint and $k$ the steepness of the curve. In \app\ref{app:hyper_parameters_exp}, we analyze the choice of these parameters and its impact on the algorithm. Note that this score satisfies the conditions stated above for the nNN classifier: (i)-(ii), as it is a function in $[0,1]$ that depends on $P[C(\sL,\Delta)]=\frac{|C(\sL, \Delta)|}{|\sX|}$, while (iii) follows from the error bound (\ref{eq:local_bound}). 

Below, we demonstrate that $\cS$ is monotonically increasing with the budget $b$, which supports the alignment of definition (\ref{def:competence_score}) with empirical evidence from prior work. Specifically, \citet{hacohen2022active} showed both empirically and theoretically that the training budget $b = |\sL|$ serves as a valuable indicator for the relative suitability of ${\mS}(x)$ and ${\mF}(x)$, which are employed in (\ref{eq:obj_func_def}). In our formulation, this relationship informs a trade-off represented by the score $\cS(\sL, \Delta)$ defined in (\ref{def:competence_score}). It remains to be shown that $\cS(\sL, \Delta)$ is indeed monotonically increasing with $b$. 
\begin{proposition}
For two labeled sets $\sL,\sL'$, if $\sL\subseteq \sL'$ then   $\cS(\sL, \Delta)\leq \cS(\sL',\Delta')$.
\end{proposition}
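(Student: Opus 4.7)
The plan is to reduce the claim to a monotonicity statement about the cover probability $P(C(\sL,\Delta))$, and then exploit monotonicity of the probability measure. Concretely, observe that $\cS$ in (\ref{def:competence_score}) is the composition
\begin{equation*}
\cS(\sL,\Delta) \;=\; \phi\bigl(P(C(\sL,\Delta))\bigr), \qquad \phi(u) \;=\; \frac{1 + e^{-k(1-a)}}{1 + e^{-k(u-a)}}.
\end{equation*}
For $k\in(0,\infty)$, the map $\phi$ is strictly increasing on $\sR$ (its derivative is $k e^{-k(u-a)}(1+e^{-k(1-a)})/(1+e^{-k(u-a)})^2>0$). Hence it suffices to show $P(C(\sL,\Delta))\le P(C(\sL',\Delta'))$.

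The next step is to argue that the covered region can only grow when the labeled set grows. Unpacking the definition, $C(\sL,\Delta)=\bigcup_{(x_i,\delta_i)\in \sL\times\Delta} B_{\delta_i}(x_i)$ is a union indexed by $\sL$. If $\sL\subseteq \sL'$, and the radii in $\Delta'$ assigned to the common points of $\sL$ agree with $\Delta$ (the natural interpretation, since augmenting the labeled pool does not erase prior per-point radii), then the index set of the second union contains that of the first, so $C(\sL,\Delta)\subseteq C(\sL',\Delta')$. Monotonicity of $P$ then yields $P(C(\sL,\Delta))\le P(C(\sL',\Delta'))$, and composing with $\phi$ completes the argument.

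The main obstacle I anticipate is precisely pinning down the radii convention. If $\Delta$ and $\Delta'$ are chosen \emph{independently} (as opposed to $\Delta'$ extending $\Delta$), the set inclusion $C(\sL,\Delta)\subseteq C(\sL',\Delta')$ is not automatic. A clean resolution is to state (before invoking $\phi$) that the proposition should be read under the convention that the radii of already-labeled points are preserved when new points are added; alternatively, one can argue that since any $\Delta$ for $\sL$ can be extended to an admissible $\Delta'$ for $\sL'$ achieving at least the same cover, the maximum attainable coverage is nondecreasing in $\sL$. Either phrasing reduces the claim to the containment of unions above, and the rest is immediate.
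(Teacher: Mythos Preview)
Your proposal is correct and follows essentially the same approach as the paper: show $C(\sL,\Delta)\subseteq C(\sL',\Delta')$ via union containment, apply monotonicity of $P$, then monotonicity of the logistic map. You are in fact more careful than the paper, which silently assumes the radii convention you flag (that $\Delta'$ extends $\Delta$) and simply asserts monotonicity of the logistic without the derivative computation.
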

\begin{proof}
First, we note that
\begin{equation*}
\begin{split}
\sL\subseteq \sL' &\implies C(\sL,\Delta) = \bigcup_{x_i\in \sL} B_{\delta_i}(x_i) \\&\subseteq  \left(\bigcup_{x_i\in \sL} B_{\delta_i}(x_i)\right) \cup \left(\bigcup_{x_i\in \sL'\setminus\sL} B_{\delta_i}(x_i)\right) \\&= C(\sL',\Delta')
\end{split}
\end{equation*}
Since probability is monotonic, it follows that $P[C(\sL,\Delta)]\leq P[C(\sL',\Delta')]$.
From (\ref{def:competence_score}), and by the definition of the logistic function using the provided parameters limits, $\cS(\sL, \Delta)$ is monotonically increasing in $\sL$, which implies that $\cS(\sL, \Delta)\leq \cS(\sL',\Delta')$.
\end{proof}

\section{Our method: \MethodName}
\label{sec:method}
The error bound presented in (\ref{eq:local_bound}), derived in Section~\ref{sec:theory} by our theoretical analysis, highlights the trade-off between maximizing coverage $P[C(\sL,\Delta)]$ and maintaining high purity $P[\cup_i \{B_{\delta_i}(x_i) \text{ is pure}\}]$ while seeking an optimal query set. The analysis also highlights the benefit of using a dynamic competence score to adjust the algorithm’s objective function based on coverage. Below, we incorporate these principles into a new active learning strategy called \textbf{\MethodName}, a unified active learning strategy designed for all budget scenarios. Notably, since \citep{yehuda2022active} demonstrated that maximizing coverage is an NP-hard problem, we address this challenge with a greedy algorithm for query selection.

\begin{figure*}[htb]
    \centering
  \begin{subfigure}{0.49\linewidth}
    \centering
 \includegraphics[width=\linewidth]{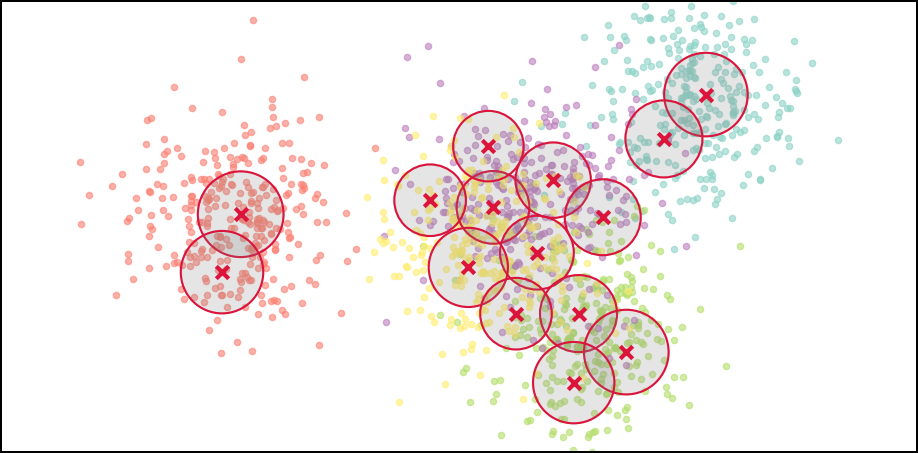}
    \caption{Initial active sampling using \(\delta_0\) in DCoM, following \probcover\ approach.}
    \label{fig:updating_delta:a}
  \end{subfigure}
  \begin{subfigure}{0.49\linewidth}
    \centering
    \includegraphics[width=\linewidth]{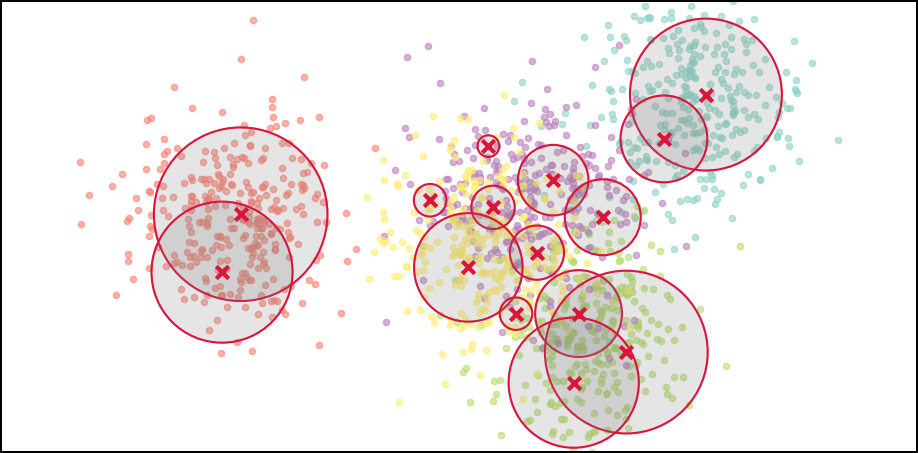}
    \caption{\MethodName's updated $\Delta$ list, which includes a suitable $\delta$ value for each example in $\sL$}
    \label{fig:updating_delta:b}
  \end{subfigure}
\caption{Illustration of \textit{Step 4: $\Delta$-Adjustment} in \MethodName, comparing the coverage obtained by \MethodName\ (b) with \probcover\ (a) using budget $b=15$. Selected points are marked by x, and the covered area is shaded in light gray. While \probcover\ maintains a constant $\delta$ for each example, \MethodName\ seeks to maximize coverage while preserving the purity of the balls by fitting a suitable radius for each center.}
  \label{fig:updating_delta}
\end{figure*} 

\subsection{Framework and definitions}
$\forall x \in \sX$, let $M(x)$ denote the normalized margin between the two highest softmax outputs from the last trained model $\sM$. Define $\Delta$ as the set of individual $\delta$-values for each labeled example in $\sL$. As in the theoretical analysis, $P[C(\sL,\Delta)]$ denotes the probability of coverage for $\sL$ given $\Delta$. The competence score $\cS(\sL, \Delta)$ is defined by the objective function (\ref{def:competence_score}). The graph $\{G_{\delta} = (V, E)\}$ is a directed adjacency graph where nodes are samples, and edges connect pairs of nodes if their distance in the embedding space is less than $\delta$.

\subsection{Active learning method}
\label{sec:greedy_alg}
Active Learning (AL) is performed iteratively (see Fig.~\ref{fig:AL_illustration}): at each iteration $i$, a query set $\tQ$ of $q$ unlabeled examples is selected based on a specific strategy, where $q$ denotes the query set size and $\tQ$ denotes the current query set. These examples are then labeled by an oracle, added to the labeled set $\sL$, and removed from the unlabeled pool $\sU$. Subsequently, the algorithm parameters may be updated as needed, depending on the specific requirements of the algorithm. This process is repeated until the label budget is exhausted or predefined termination conditions are met. 

Our AL method \textbf{\MethodName} involves several distinct steps:

\myparagraph{Step 1: Initialization.} Identify an appropriate embedding space for $\sU \cup \sL$, where distances are expected to be inversely correlated to semantic similarity. This can be achieved using self-supervised or representation learning techniques. Select an initial radius $\delta_0$ as outlined in \citep{yehuda2022active}, and initialize the parameter $\Delta=\emptyset$. If $\sL\neq\emptyset$, skip \textit{Step 2} for now and begin the iterations with \textit{Step 3}.

\myparagraph{Step 2: Active sampling.} Select $q$ examples from $\sU$ for labeling (see Alg~\ref{alg:active_sampling} for the pseudo-code). Begin by computing the normalized margin $M(x)$ $\forall x \in \sU$ using the latest model $\sM$. Next, construct a directed adjacency graph $\{G_{\delta_\text{avg}} = (V, E)\}$, where $\delta_\text{avg}$ is the average of $\Delta$, or $\delta_0$ if $\Delta$ is empty. To avoid covering points multiple times, $\forall x_i \in \sL$,  $\forall x \in B_{\delta_i}(x_i)$ and  $\forall v\in{V}$, prune all incoming edges $e = (v, x) \in E$. Additionally, prune all outgoing edges $e = (x_i, v) \in E$.

Using this graph, compute the Out-Degree-Rank (ODR) $\forall x \in \sU$. Likewise, compute the probability of coverage $P[C(\sL,\Delta)]$ of the current $\sL$ and its corresponding $\Delta$, and use these quantities to determine $\cS(\sL, \Delta)$. Now select $q$ points from $\sU$ in a greedy manner as follows:
\begin{enumerate}[leftmargin=0.7cm,nolistsep]
\setlength\itemsep{.3em}
    \item $\forall x\in\sU$, compute the Out-Degree-Rank $\text{ODR}(x)$. 
    \item $\forall x\in\sU$, compute a ranking score 
\begin{equation*}    
\hspace{-.6cm} R(x) = \cS(\sL, \Delta) \cdot (1-M(x)) + (1 - \cS(\sL, \Delta)) \cdot \text{ODR}(x).
\end{equation*}    
    \item Select the vertex $x_{\text{max}}$ with the highest score
\begin{equation*}    
x_{\text{max}}=\text{argmax}_{x\in\sU} R(x).
\end{equation*}    
    \item Remove all incoming edges to $x_{\text{max}}$ and its neighbors, implying that $ODR(x_\text{max})= 0$.
    \item Set $M(x_{\text{max}}) = 1$.
\end{enumerate}

\begin{algorithm}[htb]
    \footnotesize
    \caption{\MethodName, \textbf{Active sampling}}
    \label{alg:active_sampling}

    \textbf{Input}: unlabeled pool $\sU$, labeled pool $\sL$, query size $q$, list of ball sizes $\Delta$, trained model $\sM$ if $\sL\neq\emptyset$\\
    \textbf{Output}: a set of points to query, and the coverage of $\sL$

    \begin{algorithmic}[1]
    \STATE $\forall x\in \sU$, $M(x) \leftarrow$ normalized margin between the two highest softmax outputs of $\sM$ if $\sL\neq\emptyset$, 1 otherwise
%    \STATE \daphna{don't you need to initialize $\Delta$ if $\sL=\emptyset$?}
    \STATE Compute $P[C(\sL,\Delta)]$
    \STATE Compute $\cS(\sL, \Delta)$
    \STATE $\delta_{\text{avg}} \leftarrow$ Average($\Delta$)
    \STATE $G = (V = \sU\cup\sL, E = \{(x, x'): x' \in B_{\delta_{\text{avg}}}(x)\})$
    \FOR{$(x_i, \delta_i) \in \sL\times\Delta$}
        \STATE $\forall x\in B_{\delta_i}(x_i)$: $\forall e=(v,x)\in E$, remove $e$   
        % {\color{airforceblue} \COMMENT{DON'T COVER POINTS TWICE}}
        \STATE $\forall e=(x_i,v)\in E$, remove $e$ 
    \ENDFOR
    % \FOR{$(x_i, \delta_i) \in \sL\times\Delta$}
    %     \STATE $\forall x\in B_{\delta_i}(x_i)$: $\forall e=(v,x)\in E$, if $v\in B_{\delta_i}(x_i)$ then remove $e$ \daphna{tried to fix it but I think it's not precise - check all quantifiers}
    %     \STATE $\forall e=(x_i,v)\in E$, remove $e$
    % \ENDFOR
    \vspace{\baselineskip}
    \STATE ${\tQ} \leftarrow \emptyset$
    \FOR{$i=1,...,q$}
        \STATE $\forall x\in \sU$, $\text{ODR}(x)\leftarrow $Out-Degree Rank
        \STATE $\forall x\in \sU$, $R(x) \leftarrow \cS(\sL, \Delta) \cdot (1-M(x))$ + (1 - $\cS(\sL, \Delta))\cdot\text{ODR}(x)$
        \STATE $x_{max} \leftarrow\{\text{argmax}_{x\in\sU} R(x)\}$
        \STATE $\tQ\leftarrow {\tQ}\cup \{x_{max}\}$
        \STATE $\forall x\in B_{\delta_{avg}}(x_{max})$: $\forall e=(v,x)\in E$, remove $e$ 
        \STATE $M(x_{max})\leftarrow\text{1}$ 
    \ENDFOR
    \RETURN ${\tQ}$, $P[C(\sL,\Delta)]$
    \end{algorithmic}
\end{algorithm}

\myparagraph{Step 3: Model training:}
Obtain labels for the query set $\tQ$ derived in \textit{Step 2}. Remove $\tQ$ from $\sU$ and add it to $\sL$. Finally, train model $\sM$ using the updated $\sL$ (and also $\sU$ if the learner employs semi-supervised learning).

\myparagraph{Step 4: $\boldsymbol{\Delta}$-Adjustment.} Determine a suitable $\delta_i$ for each new labeled example $x_i \in \tQ$ and add it to $\Delta$, see Alg~\ref{alg:Delta_Adjustment} for pseudo-code and illustration in Fig.~\ref{fig:updating_delta}. Begin by using the updated model $\sM$ to predict labels for all points in the unlabeled set $\sU$. Likewise, compute a purity threshold $\tau = m \cdot P[C(\sL \setminus \tQ, \Delta)] + d$, where\footnote{The linear transformation defined by $m,d$ adjusts the range of coverage scores  into the relevant range of purity scores $[b, b+m]$.} $P[C(\sL \setminus \tQ, \Delta)]$, the probability of coverage using the old labeled set, is derived in \textit{Step 2}. Finally, for each new labeled example $v \in \tQ$, search for the largest radius $\delta_{\text{opt}}$ whose purity $P\left(\left\{ x \in B_{\delta_{\text{opt}}}\left(v\right): f(x) = f(v) \right\}\right)$ remains above $\tau$. Assuming that $\pi(B_{\delta}(x))$ is monotonic with $\delta$, use binary search. 
%\inbal{Daphna - We considered writing the linear function as a normalization to provide that intuition (instead of using a constant value, we allow it to be slightly modified according to the model's coverage/competence). However, I was unable to find a good way to integrate it.}

\begin{algorithm}[htb]   
    \footnotesize
    \caption{\MethodName, $\boldsymbol\Delta$\textbf{-Adjustment}}
    \label{alg:Delta_Adjustment}

    \textbf{Input}: unlabeled pool $\sU$, labeled pool $\sL$, query pool $\tQ$, maximal ball size $\delta_{\text{max}}$, list of ball sizes $\Delta$ for $\sL\setminus{\tQ}$, trained model $\sM$ from the current iteration and $P[C(\sL\setminus{\tQ},\Delta)]$ from \textit{Step 2}\\
    \textbf{Output}: updated list of ball sizes $\Delta$

    \begin{algorithmic}[1]    
    \STATE $\tau \leftarrow m \cdot P[C(\sL\setminus{\tQ},\Delta)] + d$
    \STATE $\hat{\sY}\leftarrow$ The predicted label for each $x\in\sU$ using the model $\sM$
    \FOR{$v \in {\tQ}$}
        \STATE $\delta_{\text{opt}}\leftarrow \text{argmax}_\delta [P\left(\left\{ x \in B_{\delta}\left(v\right):f(x)=f(v)\right\} \right)>\tau$]
        %Apply binary search over $\delta$ values between $0$ and $\delta_{\text{max}}$ to find the greatest $\delta_{\text{opt}}$ that brings the closest purity value to $\tau$. Calculate the purity using $\hat{\sY}$.
        \STATE $\Delta \leftarrow \Delta + [\delta_{\text{opt}}]$
    \ENDFOR
    
    \RETURN $\Delta$
    \end{algorithmic}
\end{algorithm}

%The algorithm relies on $P[C(\sL,\Delta)]$ as an indicator of progress, which adjusts the purity threshold and affect the transition from a density-based scoring method to an margin-based one. 

\section{Empirical evaluation}
\label{sec:results}

\subsection{Methodology}
\label{sec:methodology}
This section investigates two deep AL frameworks: a \emph{fully supervised} approach, wherein a deep model is trained exclusively on the annotated dataset $\sL$ as a standard supervised task, and a \emph{semi-supervised} framework trained on both the annotated and unlabeled datasets, $\sL$ and $\sU$. Our experimental setup is based on the codebase of \citep{munjal2022towards}, ensuring a fair comparison among the various AL strategies by using the same network architectures while sharing all the relevant experimental conditions. 

More specifically, we trained ResNet-18 \citep{DBLP:conf/cvpr/HeZRS16} on the following benchmark datasets: STL-10 \citep{coates2011analysis}, SVHN \citep{netzer2011reading}, CIFAR-100 \citep{krizhevsky2009learning} and subsets of ImageNet \citep{deng2009imagenet} including ImageNet-50, ImageNet-100, and ImageNet-200 as defined in \citep{van2020scan}. The hyper-parameters are detailed in \app\ref{app:impl_details}. In the semi-supervised framework, we used FlexMatch \citep{zhang2021flexmatch} and the code provided by \citep{usb2022}, adopting the parameters specified by \citep{zhang2021flexmatch} (see details in \app\ref{app:impl_details}). While the ResNet-18 architecture may no longer achieve state-of-the-art results on the datasets examined, it serves as a suitable platform to evaluate the efficacy of AL strategies in a competitive and fair environment, where they have demonstrated benefits. 

In our comparisons, we used several active learning strategies for selecting a query set:
\begin{inparaenum}[(i)] \item \emph{Random} sampling, \item \emph{Min margin} — lowest margin between the two highest softmax outputs; \item \emph{Max entropy} — highest entropy of softmax outputs; \item \emph{Uncertainty} — 1 minus the highest softmax output; \item \emph{DBAL} \citep{gal2017deep}; \item \emph{CoreSet} \citep{sener2018active}; \item \emph{BALD} \citep{kirsch2019batchbald}; \item \emph{BADGE} \citep{ash2019deep}; \item \emph{ProbCover} \citep{yehuda2022active}; and \item \emph{LDM-s} \citep{cho2023querying}. \end{inparaenum} When available, we used the code provided in \citep{munjal2022towards} for each strategy. When it was unavailable, we used the code from the repositories of the corresponding papers. Since \citep{cho2023querying} doesn't provide code, we compared our results with the corresponding available results reported in their paper, despite differences in the running setup (see \app\ref{app:comp_per_dataset}).
%In our comparisons, we used several active learning strategies (or optimizaiton criteria) for the selection of a query set:
% \begin{inparaenum}[(1)] \item \emph{Random} sampling, \item \emph{Min margin} -- lowest margin between the two highest softmax outputs, \item \emph{Max entropy}  --  highest entropy of softmax outputs \citep{shannon1948A}, \item \emph{Uncertainty} -- 1 minus maximum softmax output, \item \emph{DBAL} \citep{gal2017deep}, \item \emph{CoreSet} \citep{sener2018active}, \item \emph{BALD} \citep{kirsch2019batchbald}, \item \emph{BADGE} \citep{ash2019deep}, \item \emph{ProbCover} \citep{yehuda2022active}, \item \emph{LDM-s} \citep{cho2023querying}. \end{inparaenum} When available, we used for each strategy the code provided in \citep{munjal2022towards}. When it was unavailable, we used the code from the repositories of the corresponding papers. \citep{cho2023querying} doesn't provide code, so we compare our results to those reported in their paper, despite differences in the running setup.

\MethodName\  and \emph{ProbCover} require a suitable data embedding. For STL-10 , SVHN, CIFAR-10 and CIFAR-100 we employed the SimCLR \citep{chen2020simple} embedding. For the ImageNet subsets we used the DINOv2 \citep{oquab2023dinov2} embedding. Section~\ref{sec:ablation} provides an extensive ablation study, which includes alternative embedding spaces. %Moreover, within this section, we compare multiple uncertainty-based methods, revealing that the high-budget algorithm doesn't exert significant influence when initialized with the dynamic probability coverage method. Additionally, we demonstrate the efficacy of blending different methods by decomposing the algorithm into its two objective functions.
In all experiments, identical settings and hyper-parameters are used  as detailed in \app\ref{app:impl_details}. The robustness of our method to these choices is discussed in \app\ref{app:hyper_parameters_exp}. 

Using a sparse representation of the adjacency graph enables \MethodName\ to handle large datasets efficiently without exhaustive space requirements. The algorithm's complexity, including adjacency graph construction and sample selection, is discussed in \app\ref{app:complexity_analysis}.

\subsection{Main results}
\label{sec:main_results}

% with DCoM
\begin{figure}[htb] % htb
\center
\begin{subfigure}{\columnwidth}
\includegraphics[width=\linewidth]{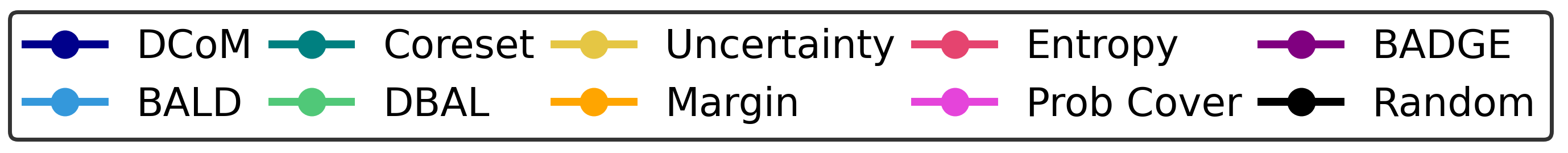}
\end{subfigure}
%\\
\begin{subfigure}{.49\columnwidth}
\includegraphics[width=\linewidth]{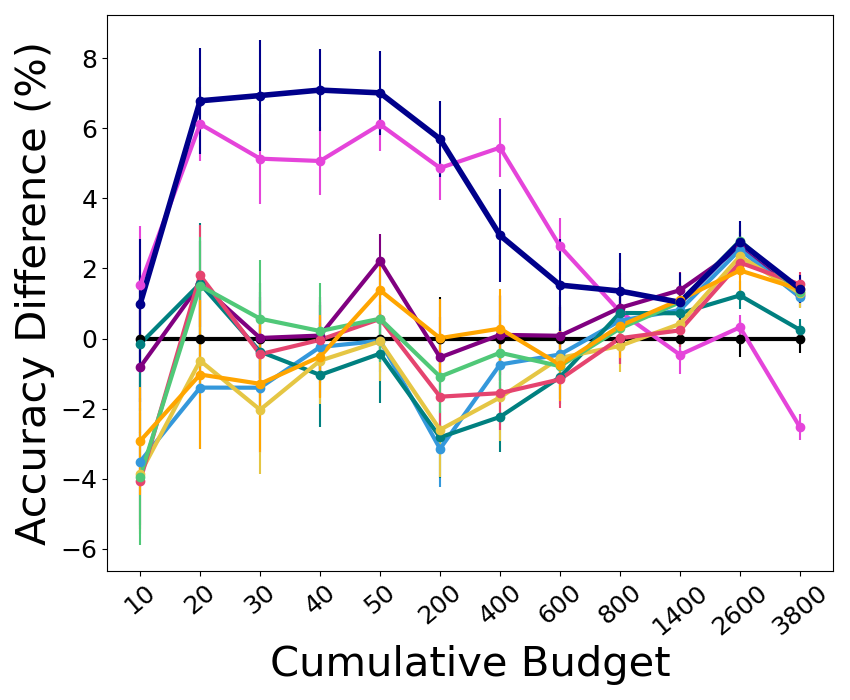}
\caption{STL-10} 
\label{fig:main_results:a}
\end{subfigure}
\begin{subfigure}{.49\columnwidth}
\includegraphics[width=\linewidth]{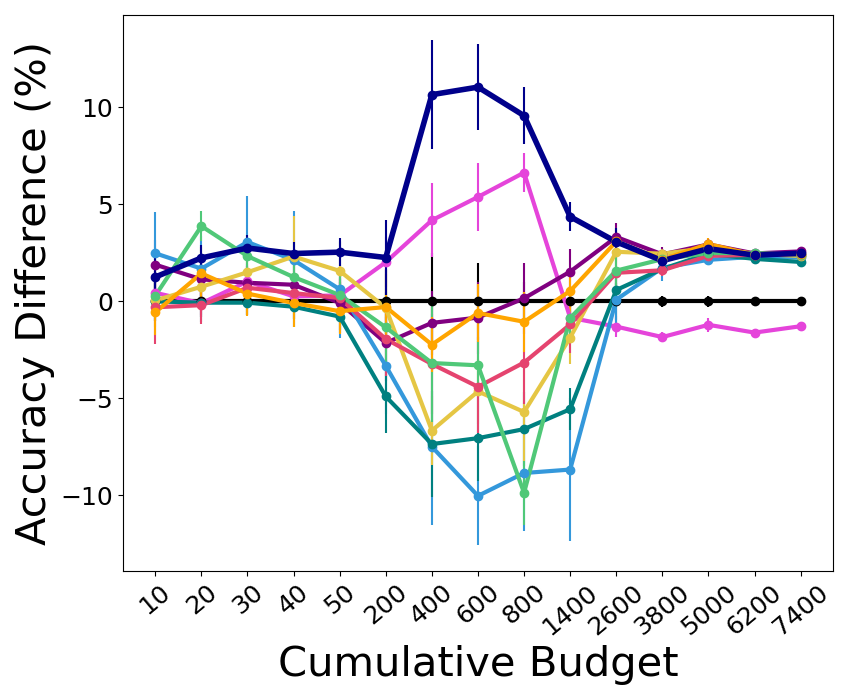}
\caption{SVHN} 
\label{fig:main_results:b}
\end{subfigure}
% \begin{subfigure}{.325\textwidth}
% \includegraphics[width=\linewidth]{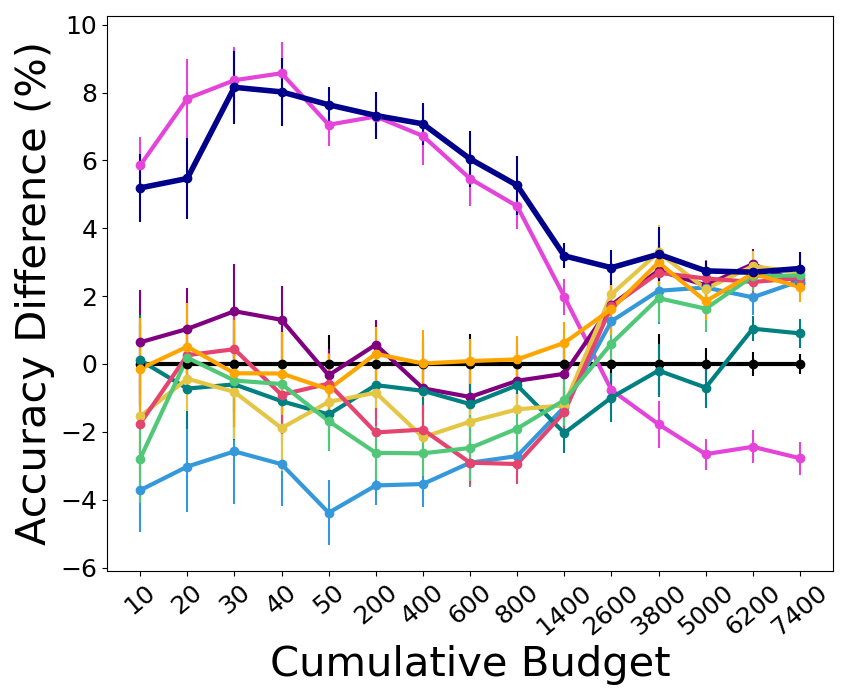}
% \caption{CIFAR-10}
% \label{fig:main_results:c}
% \end{subfigure}
%\\
\begin{subfigure}{.49\columnwidth}
\includegraphics[width=\linewidth]{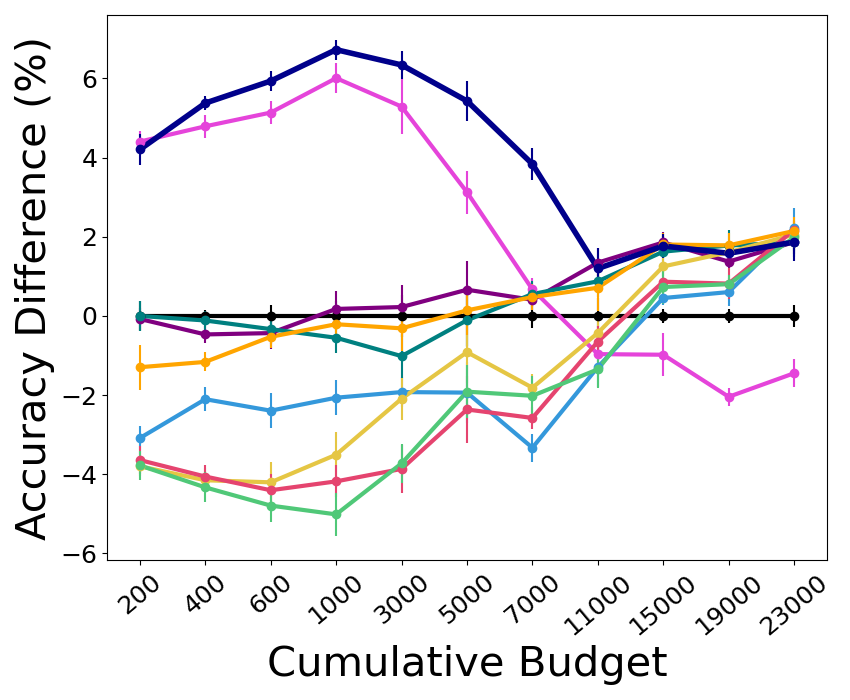}
\caption{CIFAR-100} 
\label{fig:main_results:d}
\end{subfigure}
\begin{subfigure}{.49\columnwidth}
\includegraphics[width=\linewidth]{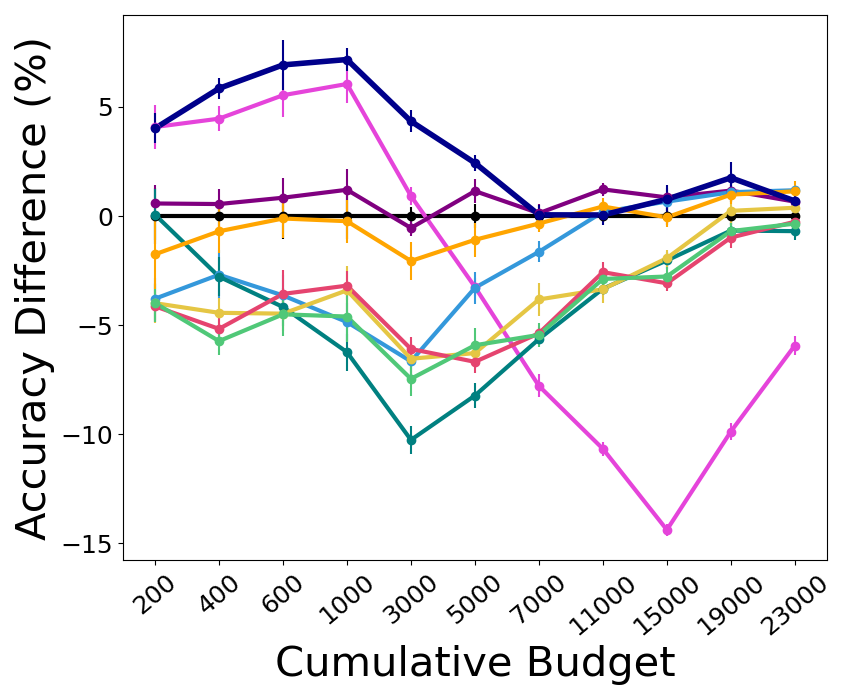}
\caption{ImageNet-50} 
\label{fig:main_results:e}
\end{subfigure}
\begin{subfigure}{.49\columnwidth}
\includegraphics[width=\linewidth]{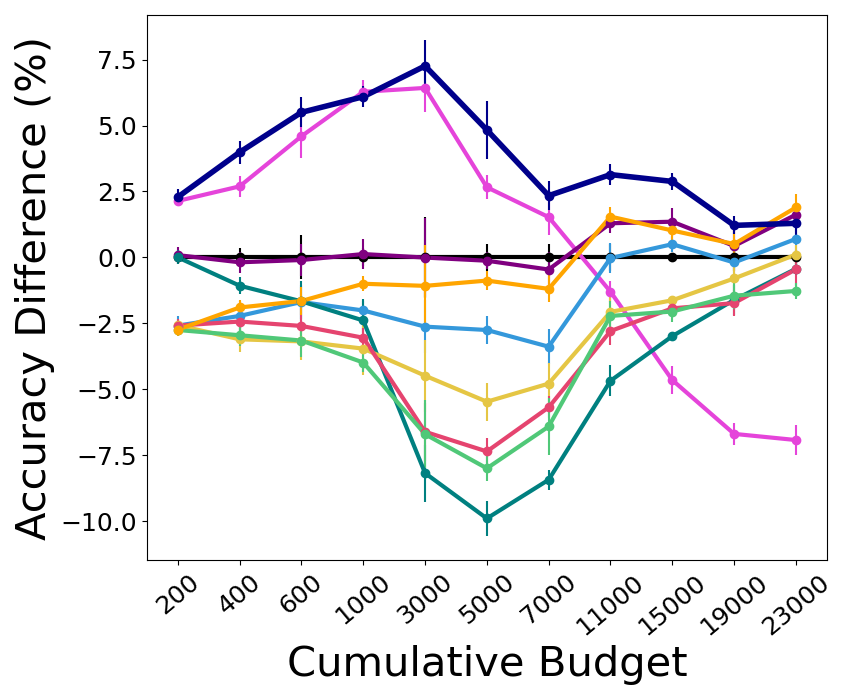}
\caption{ImageNet-100}
\label{fig:main_results:f}
\end{subfigure}
\begin{subfigure}{.49\columnwidth}
\includegraphics[width=\linewidth]{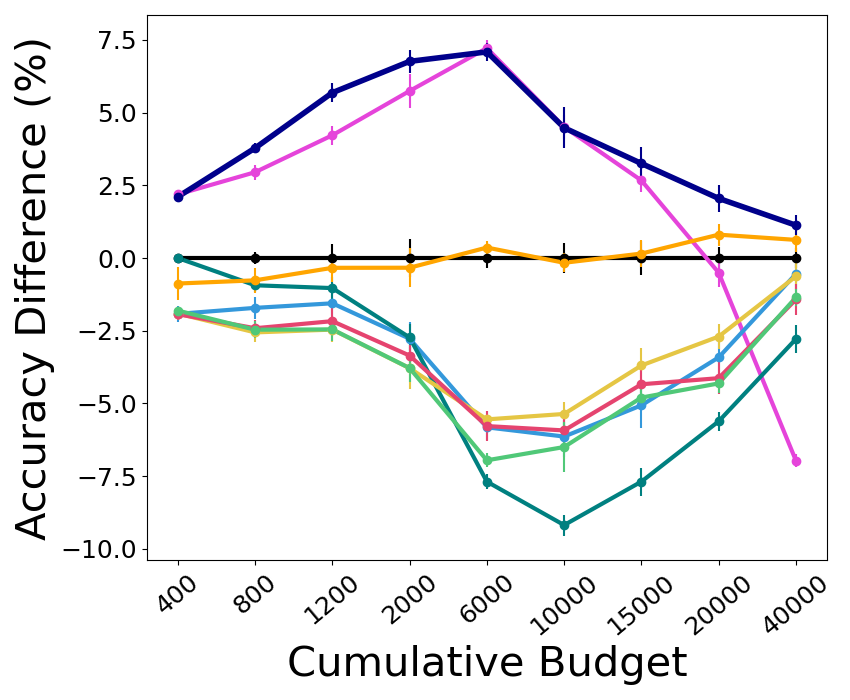}
\caption{ImageNet-200} 
\label{fig:main_results:g}
\end{subfigure}
\caption{Mean accuracy difference between AL algorithms and random queries. Positive mean difference indicates that the corresponding AL method is beneficial. The errors bars correspond to standard error, using 5-10 repetitions (3 for the ImageNet subsets). While some methods perform well only in specific budget regime, \MethodName\ consistently achieves the best results across all budgets. Cumulative budget is evenly spaced for easy range comparison. %\daphna{rewrite last sentence}
}

\label{fig:main_results}
\end{figure}

\myparagraph{Fully supervised framework.} 
We evaluate various AL methods by training a ResNet-18 model from scratch for each different AL strategy, using the labels of the queries selected by the respective method. In each AL iteration, a new model is trained from scratch using the updated labeled set $\sL$. This process is repeated for several AL iterations. Fig.~\ref{fig:main_results} shows the difference between the accuracy obtained by each AL method, and the accuracy obtained when training a similar network with a random query set of the same size (the baseline of no active learning), %. We show mean and the standard error over $5-10$ repetitions ($3$ for the ImageNet's subsets), 
for each AL iteration. The final accuracy of each method is shown in Table~\ref{tab:cifar100_results} for the CIFAR-100 dataset, as mean $\pm$ Standard Error (STE). 
Similar tables for the remaining datasets are shown in \app\ref{app:add_emp_results}.

\begin{table*}[htb]
\centering
\scriptsize
\setlength{\tabcolsep}{1pt} % Adjust horizontal space between cells
\begin{tabularx}{0.995\textwidth}{l*{11}{>{\centering\arraybackslash}X} r}
\toprule
$|\sL|$ & Random & Prob Cover & BADGE & BALD & Coreset & Uncertainty & Entropy & DBAL & Margin & LDM-s & DCoM \\ 
\midrule
200 & 6.39±0.19 & \textbf{10.79±0.08} & 6.31±0.09 & 3.30±0.11 & 6.39±0.19 & 2.59±0.05 & 2.74±0.17 & 2.61±0.18 & 5.09±0.37 & - & \textbf{10.59±0.19} \\ 
400 & 8.74±0.07 & 13.53±0.22 & 8.27±0.14 & 6.63±0.23 & 8.62±0.17 & 4.59±0.23 & 4.68±0.23 & 4.41±0.30 & 7.58±0.17 & - & \textbf{14.11±0.11} \\ 
600 & 10.73±0.13 & 15.86±0.16 & 10.29±0.27 & 8.33±0.31 & 10.39±0.25 & 6.52±0.38 & 6.32±0.27 & 5.94±0.28 & 10.20±0.16 & - & \textbf{16.66±0.11} \\ 
1,000 & 13.49±0.18 & 19.49±0.20 & 13.66±0.26 & 11.42±0.26 & 12.94±0.21 & 9.98±0.38 & 9.31±0.22 & 8.47±0.35 & 13.28±0.19 & - & \textbf{20.21±0.07} \\ 
3,000 & 24.02±0.27 & 29.31±0.42 & 24.24±0.28 & 22.10±0.11 & 23.00±0.28 & 21.93±0.26 & 20.15±0.33 & 20.29±0.23 & 23.71±0.31 & - & \textbf{30.36±0.08} \\ 
5,000 & 31.52±0.39 & 34.64±0.15 & 32.18±0.33 & 29.58±0.38 & 31.41±0.26 & 30.60±0.40 & 29.15±0.46 & 29.60±0.28 & 31.66±0.23 & - & \textbf{36.95±0.10} \\ 
7,000 & 37.69±0.16 & 38.36±0.13 & 38.09±0.10 & 34.36±0.20 & 38.24±0.16 & 35.87±0.19 & 35.11±0.11 & 35.67±0.35 & 38.16±0.17 & 31.85±0.00 & \textbf{41.52±0.25} \\ 
11,000 & 45.66±0.22 & 44.69±0.20 & \textbf{47.00±0.16} & 44.37±0.28 & 46.53±0.18 & 45.23±0.20 & 45.01±0.17 & 44.31±0.24 & 46.37±0.36 & 40.88±0.01 & \textbf{46.86±0.28} \\ 
15,000 & 50.57±0.09 & 49.58±0.46 & \textbf{52.42±0.17} & 51.02±0.09 & \textbf{52.19±0.29} & 51.82±0.16 & 51.43±0.21 & 51.30±0.35 & \textbf{52.37±0.19} & 46.59±0.01 & \textbf{52.33±0.22} \\ 
19,000 & 54.95±0.09 & 52.90±0.14 & 56.32±0.07 & 55.55±0.27 & \textbf{56.72±0.31} & \textbf{56.55±0.21} & 55.76±0.31 & 55.75±0.14 & \textbf{56.73±0.24} & 49.41±0.01 & \textbf{56.53±0.07} \\ 
23,000 & 57.74±0.14 & 56.29±0.21 & \textbf{59.60±0.16} & \textbf{59.96±0.35} & \textbf{59.59±0.22} & \textbf{59.79±0.23} & \textbf{59.91±0.08} & \textbf{59.72±0.21} & \textbf{59.88±0.21} & 52.48±0.00 & \textbf{59.60±0.33} \\ 
\bottomrule
\end{tabularx}
\caption{Model accuracy with varying labeled set sizes $\sL$ (row) and AL strategies (column) while training directly on CIFAR-100 images.}
\label{tab:cifar100_results}
\end{table*}

\myparagraph{Semi-supervised framework.} We evaluate the effectiveness of 5 representative AL strategies with FlexMatch, a competitive semi-supervised learning method. We also evaluate our method, and the results of random sampling (no AL) as baseline. Fig.~\ref{fig:semi_supervised_low_budgets} shows the mean accuracy and STE based on 3 repetitions of FlexMatch, employing labeled sets generated by the different AL strategies. Note that in these experiments, since the query is selected from an unlabeled set, the resulting labeled set $\sL$ may be unbalanced between the classes, which is especially harmful when the budget is small. This affects all the methods. As a result, the accuracy with random sampling may differ from reported results. Notably, \MethodName\ demonstrates significant superiority over random sampling.

\begin{figure}[htb]
    \centering
\includegraphics[width=1\linewidth]{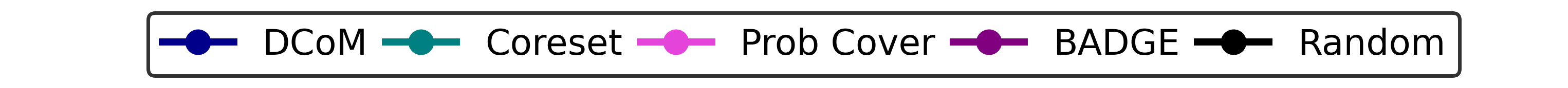}
  \begin{subfigure}{0.49\linewidth}
    \centering
    \includegraphics[width=\linewidth]{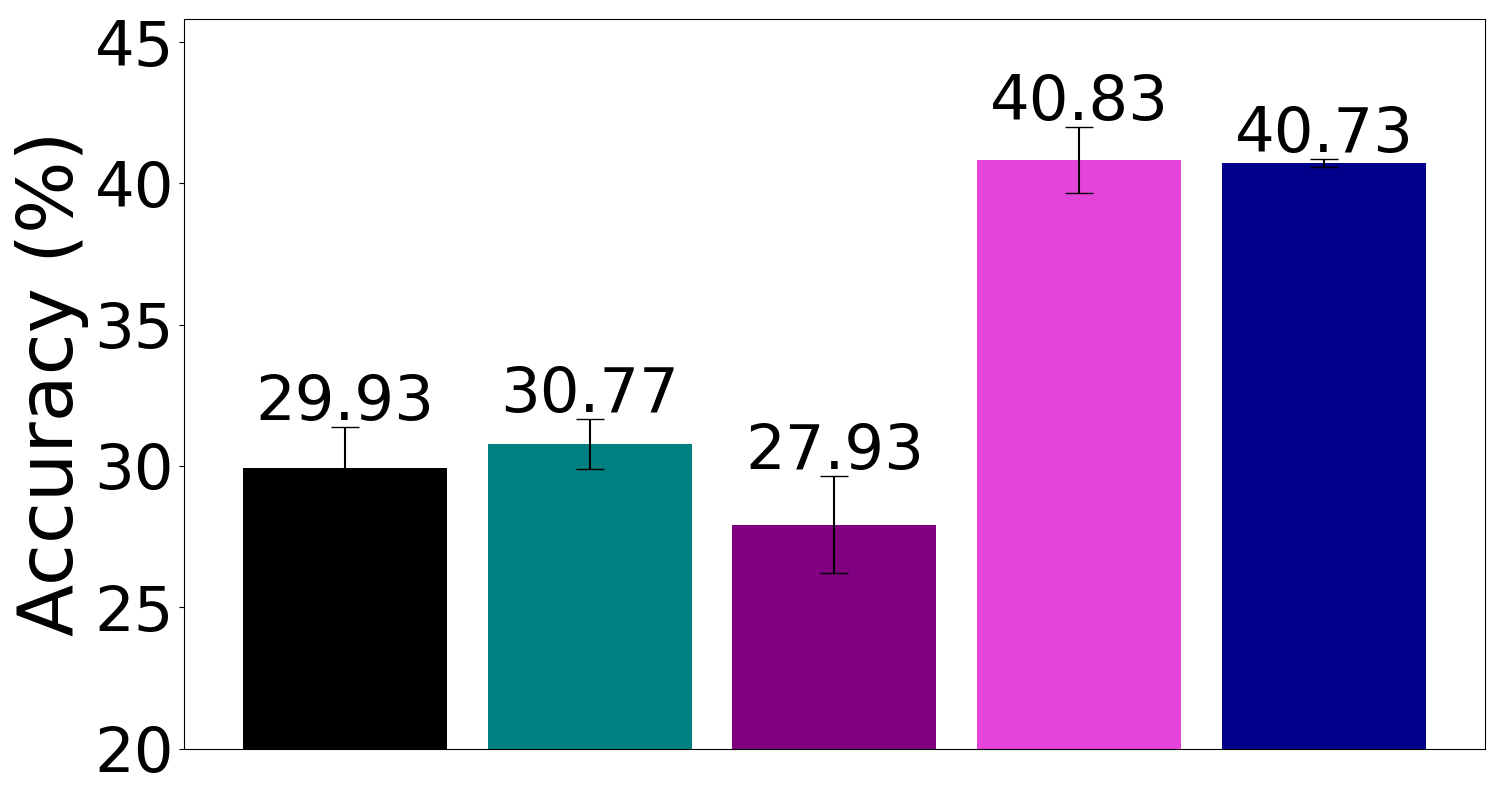}
    \begin{center} {\small $|\sL|=400$} \end{center}
  \end{subfigure}\hfill
  \begin{subfigure}{0.49\linewidth}
    \centering
    \includegraphics[width=\linewidth]{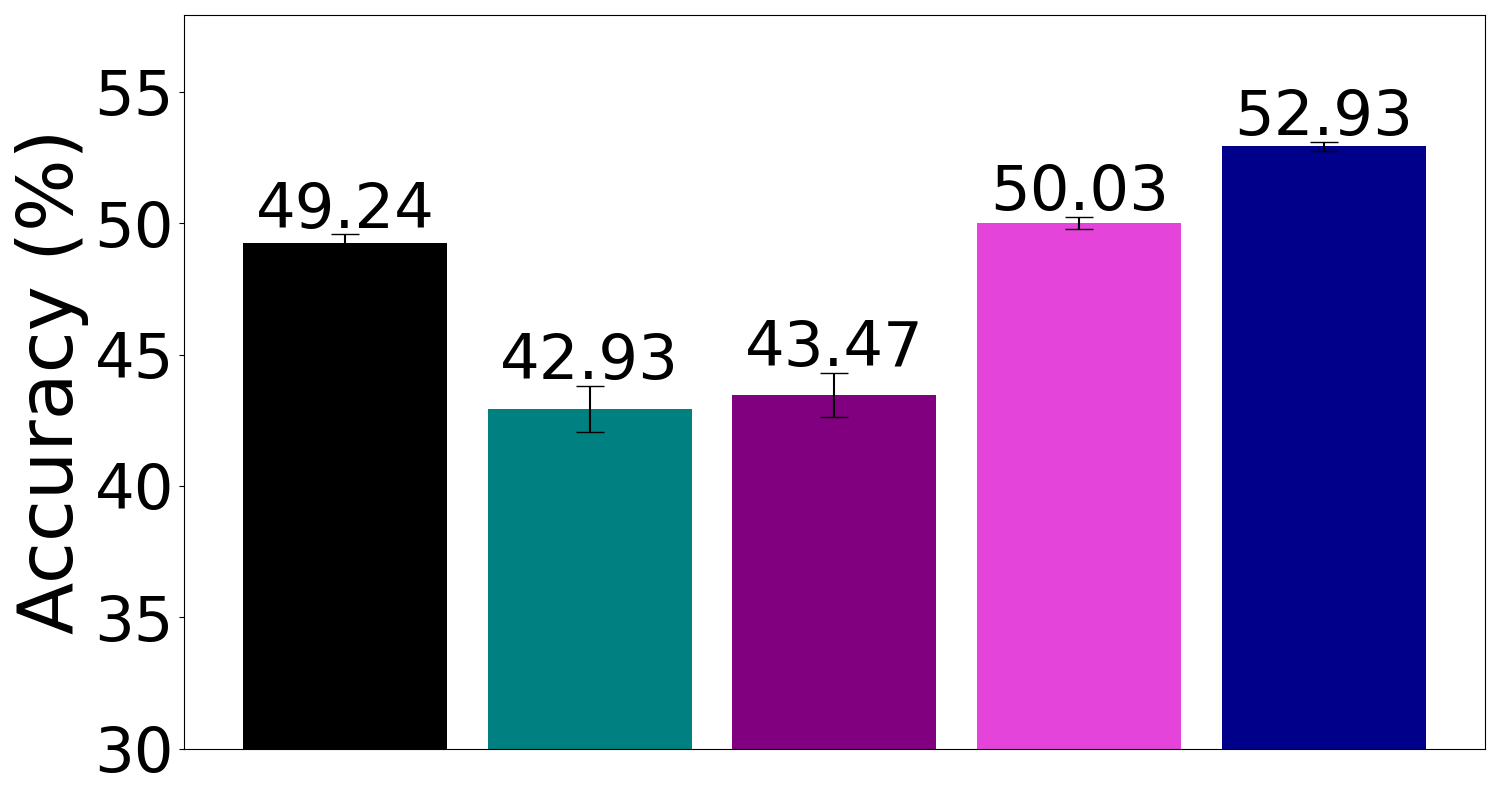}
    \begin{center} {\small $|\sL|=1,000$} \end{center}
  \end{subfigure}
  \caption{Comparison of AL strategies in a semi-supervised learning framework. Each bar shows the mean test accuracy over 3 repetitions of FlexMatch, trained with 400 and 1,000 labeled examples on CIFAR-100. Three AL rounds are conducted, selecting 100, 300, and 600 examples in each round.}
\label{fig:semi_supervised_low_budgets}
\end{figure}

\subsection{Ablation study}
\label{sec:ablation}
We present a series of experiments where we investigate the choices made by our method, including different feature spaces and different uncertainty-based measures. The results demonstrate the robustness of our method to these choices. Additional choices are investigate in \app\ref{app:add_emp_results}, including the contribution of a dynamic $\delta$ and initialization $\delta_0$ %values for each example individually (\app\ref{app:Delta_distribution}), and robustness to $\delta_0$ initialization 
(\app\ref{app:delta_initialization}). In \app\ref{app:comp_score_illust}, we evaluate the effectiveness of $\cS(\sL, \Delta)$  from (\ref{def:competence_score}) in capturing competence. 

\begin{figure}[htb]
    \centering
\includegraphics[width=.975\linewidth]{results/two_rows.png}
  \begin{subfigure}{0.49\linewidth}
    \centering
    \includegraphics[width=\linewidth]{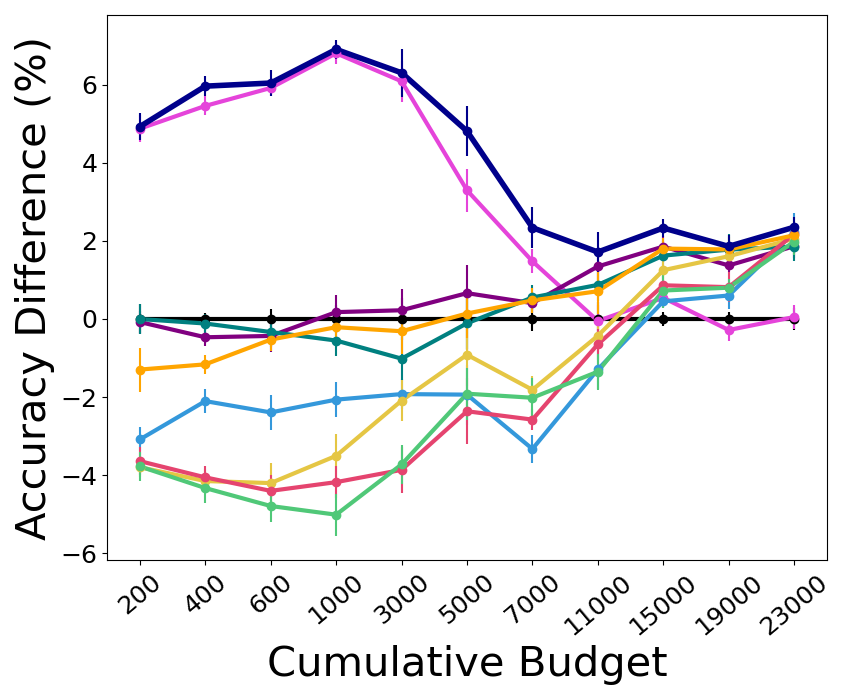}
    \caption{MOCOv2+}
    \label{fig:diff_feature_spaces_cifar100:a}
  \end{subfigure}\hfill
  \begin{subfigure}{0.49\linewidth}
    \centering
    \includegraphics[width=\linewidth]{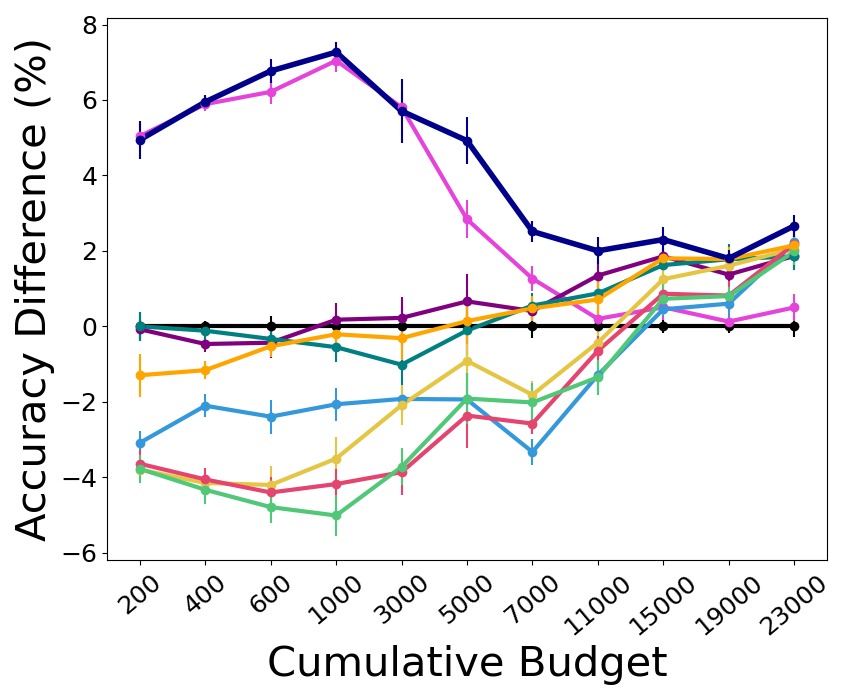}
    \caption{Barlow Twins}
    \label{fig:diff_feature_spaces_cifar100:c}
  \end{subfigure}
  \caption{Performance over CIFAR-100, using two different embedding spaces for \MethodName\ and \emph{ProbCover}, see details in the caption of Fig.~\ref{fig:main_results}. Clearly, \MethodName\ consistently achieves the best results. Results using an additional embedding space are shown in \app\ref{app:different_emb_space}.}
  \label{fig:diff_feature_spaces_cifar100}
\end{figure}

\myparagraph{Different feature spaces.}
As discussed in Section~\ref{sec:method}, our approach relies on the existence of a suitable embedding space, where distance is indicative of semantic dissimilarity. We now repeat the basic fully-supervised experiments while varying the embedding space, comparing various popular self-supervised representations. Results are reported in Fig.~\ref{fig:diff_feature_spaces_cifar100}, showing that \MethodName\ consistently delivers the best performance regardless of the specific embedding used.

\myparagraph{Breaking down the algorithm into its components.}
The objective function in (\ref{eq:obj_func_def}) includes the linear combination of two objective functions, weighed by competence score $\cS(\sL, \Delta)$ from (\ref{def:competence_score}). 
% The general form of this objective function is an important part of our method, and we therefore 
We evaluate the contribution of each component in this objective function separately. Results are shown in Fig.~\ref{fig:break_down_fig}, wherein \emph{DPC} denotes the Dynamic Probability Coverage, corresponding to ${\mS}(x)$ in \MethodName, and \emph{Margin} denotes \MethodName's choice for ${\mF}(x)$. Clearly, the weighted objective function of \MethodName\ yields superior results% over its components across all labeled set sizes $\vert\sL\vert$
. 
% Similar behavior is observed for several other datasets and other selections of ${\mF}(x)$, see \app\ref{app:break_down_with_entropy}.

\begin{figure}[htb]
    \centering
\includegraphics[width=.8\linewidth]{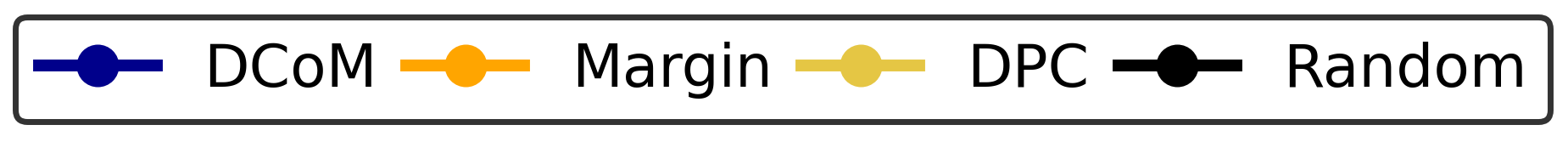}
  \begin{subfigure}{0.49\linewidth}
    \centering
    \includegraphics[width=\linewidth]{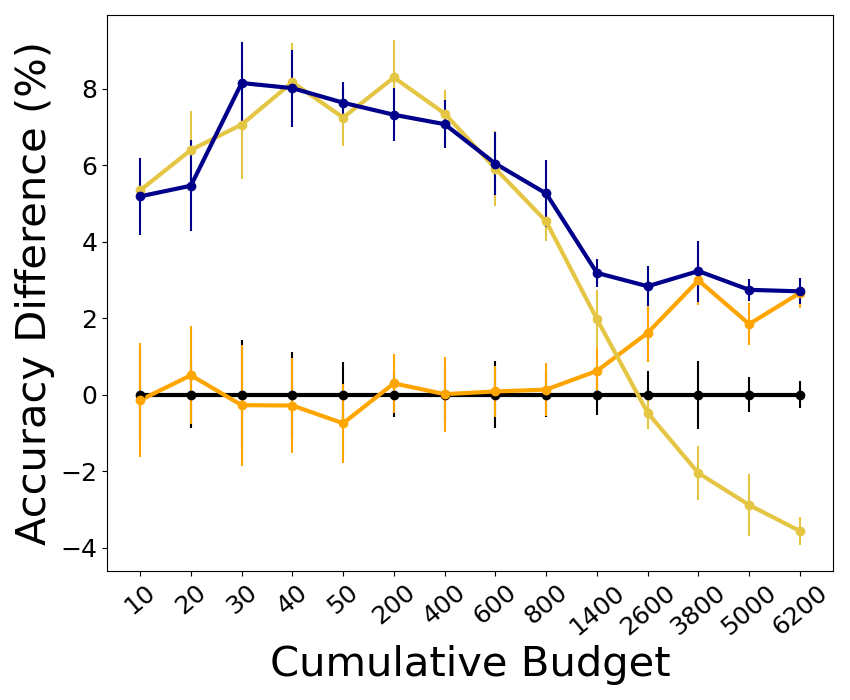}
    \caption{CIFAR-10}
    \label{fig:break_down_fig:a}
  \end{subfigure}\hfill
  \begin{subfigure}{0.49\linewidth}
    \centering
    \includegraphics[width=\linewidth]{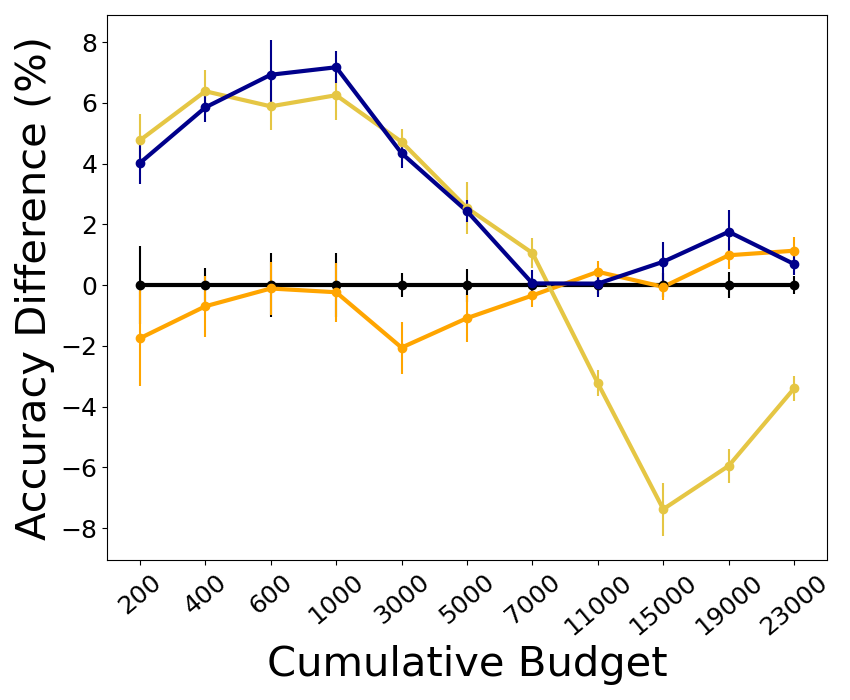}
    \caption{IMAGENET-50}
    \label{fig:break_down_fig:b}
  \end{subfigure}
  \caption{Performance (mean accuracy and STE) when optimizing each component of objective function (\ref{eq:obj_func_def}) separately and together.}
  \label{fig:break_down_fig}
\end{figure}

% \myparagraph{Empirical evaluation of $\boldsymbol{\cS}$}
% %
% Fig.~\ref{app:fig:comp_score_illust} tracks the behavior of the competence score $\cS$ across 3 datasets. In all cases, the comparison of Figs.\ref{app:fig:comp_score_illust:a},\ref{app:fig:comp_score_illust:b} reveals that $\cS$ increases monotonically with competence as measured by accuracy. Interestingly, while the critical point where $\cS$ changes from favoring $\mS$ (small values) to favoring $\mF$ (large values) corresponds to an intermediate level of accuracy, this transition is delayed for the more difficult datasets. 

% \begin{figure}[htb]
% \center
% \begin{subfigure}{.49\columnwidth}
% \includegraphics[width=\linewidth]{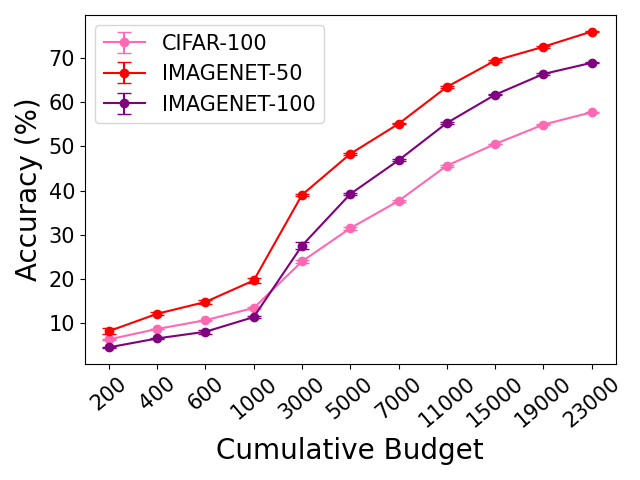}
% \caption{Accuracy with random sampling.}
% \label{fig:comp_score_illust:a}
% \end{subfigure}
% \begin{subfigure}{.49\columnwidth}
% \includegraphics[width=\linewidth]{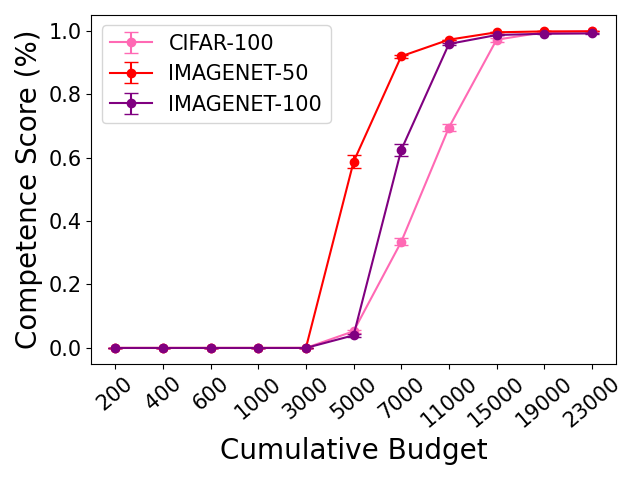}
% \caption{$\cS$ during \MethodName\ training.}
% \label{fig:comp_score_illust:b}
% \end{subfigure}
% \caption{(a) Accuracy (mean and standard error) as a function of budget $b$, learning to classify CIFAR-100, ImageNet-50, and ImageNet-100 using a random sample of $b$ points. (b) The competence score $\cS$ as a function of budget. }

% \label{fig:comp_score_illust}
% \end{figure}

\myparagraph{Different uncertainty-based methods.}
% \label{sec:ablation:diff_uncertainty_methods}
%
%We now evaluate our choice for ${{\mF}(x)}$ in objective function (\ref{eq:obj_func_def}), modifying the specific method used to measure the uncertainty of the current learner. Specifically, we use the common methods of min margin, entropy of the min max activation, and uncertainty (see details in Section~\ref{sec:methodology}). Additionally, we use a score reminiscent of the AL method BADGE  \cite{gal2017deep} and termed \emph{Gradient norm}, which computes the norm of the gradient between the learner's last two layers. Once again, the results of \MethodName\ show robustness to this choice, with insignificant differences between the different uncertainty scores, see  Fig.~\ref{fig:uncertainty_methods_ab}. Consequently, we prioritize computational efficiency and opt for the fastest-to-compute score, which is min-margin. 
%
We evaluate our choice for ${{\mF}(x)}$ in objective function (\ref{eq:obj_func_def}), investigating the alternative methods of min margin, entropy of the min max activation, and uncertainty (see details in Section~\ref{sec:methodology}). Additionally, we use a score reminiscent of the AL method BADGE  \cite{gal2017deep} and termed \emph{Gradient norm}, which computes the norm of the gradient between the learner's last two layers. Once again, the results of \MethodName\ show robustness to this choice, with insignificant differences between the different uncertainty scores, see  Fig.~\ref{fig:uncertainty_methods_ab}. Consequently, we prioritize computational efficiency and opt for the fastest-to-compute score, which is min-margin.

\begin{figure}[htp]
\centering
% \begin{minipage}[t]{1\columnwidth}
  \centering
  \begin{subfigure}{0.49\linewidth}
    \centering
      \includegraphics[width=\linewidth]{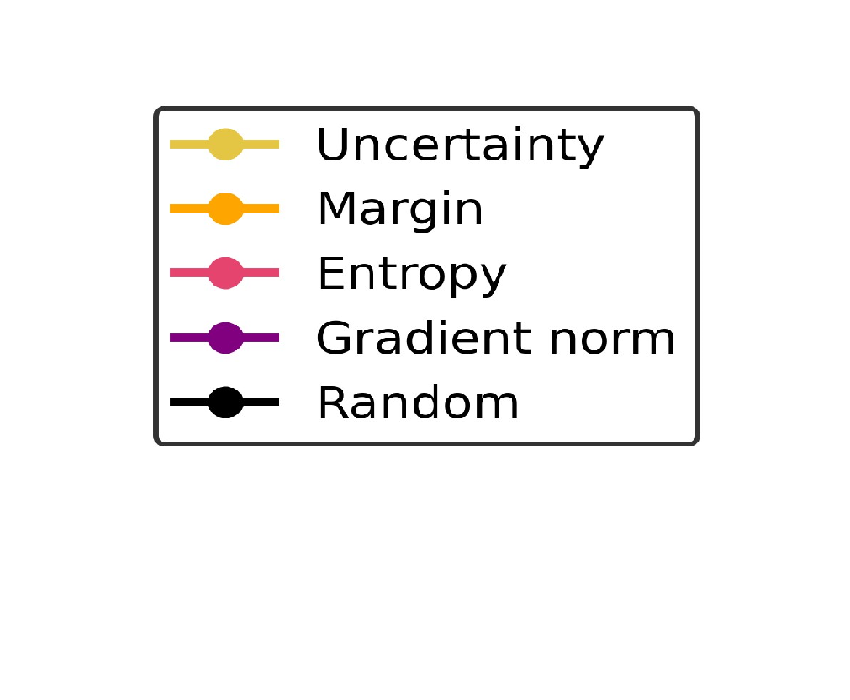}
  \end{subfigure}
  \begin{subfigure}{0.49\linewidth}
    \centering
      \includegraphics[width=\linewidth]{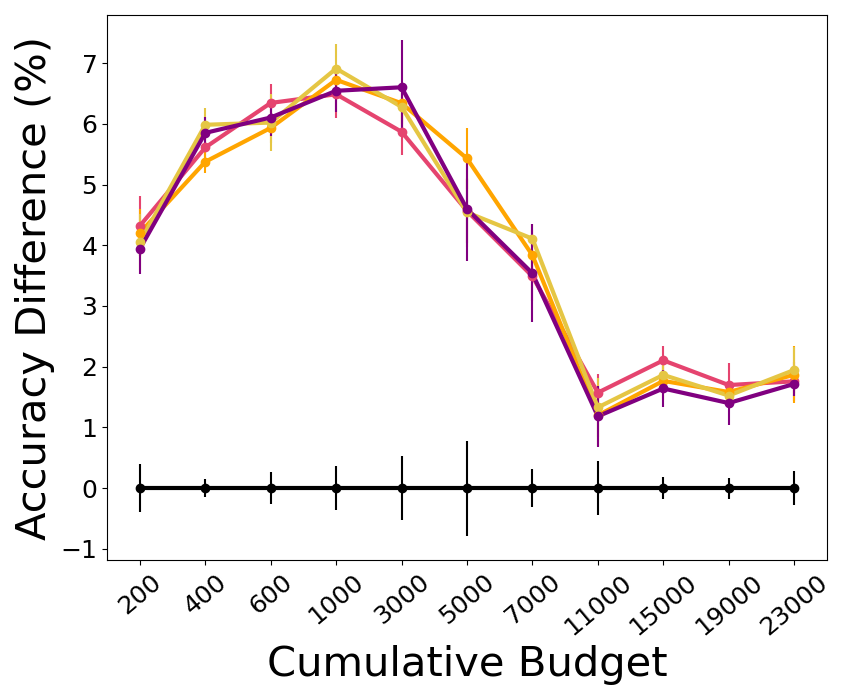}  
  \end{subfigure}
\caption{Similar to Fig.~\ref{fig:main_results},
% and Table~\ref{tab:imagenet100_results}
 using only  \MethodName\ while leveraging 4 different variants for its underlying uncertainty score (see text). %Each variation incorporates a dynamic algorithm weighted with a different uncertainty-based method. Performance is assessed by the accuracy difference between each variation and no active learning (random sampling). Each plot corresponds to $3-5$ repetitions using the CIFAR-100 dataset. These findings suggest that when the algorithm begins with optimization on ${{\mS}(x)}$, 
The performance differences between the variants are hardly significant.}
\label{fig:uncertainty_methods_ab}
\end{figure}

\section{Summary}
\label{sec:summary}
We investigate the challenge of Active Learning (AL) in a multi-budget setting. Our approach is motivated by a new bound on the generalization error of the Nearest Neighbor classifier, and aims to minimize this bound. Additionally, the sampling strategy is dynamically adjusted during training using a new competence score, also motivated by the same bound. We validate our method empirically in both supervised and semi-supervised frameworks across various datasets. The results show that \MethodName\ significantly outperforms alternative methods across all budgetary constraints.

% \myparagraph{Future directions.} In our future endeavors, we plan to explore:
% \begin{inparaenum}[(i)]
% \item potential avenues for enhancing the selection of $a$ and $k$ by leveraging existing labels or inferring a score for them through the topology of the initial self-supervised feature graph;
% \item soft-coverage methodologies, where the notion of coverage is probabilistic rather than binary, such as utilizing Gaussian measures. {@daphna - I need you to check that}
% \end{inparaenum}

\section*{Acknowledgments}
This work was supported by grants from the Israeli Council of Higher Education, AFOSR award FA8655-24-1-7006, and the Gatsby Charitable Foundation.
% \inbal{todo}

% \subsubsection{The Following Must Appear in Your Preamble}
% \begin{quote}
% \begin{scriptsize}\begin{verbatim}
% \documentclass[letterpaper]{article}
% % DO NOT CHANGE THIS
% \usepackage[submission]{aaai25} % DO NOT CHANGE THIS
% \usepackage{times} % DO NOT CHANGE THIS
% \usepackage{helvet} % DO NOT CHANGE THIS
% \usepackage{courier} % DO NOT CHANGE THIS
% \usepackage[hyphens]{url} % DO NOT CHANGE THIS
% \usepackage{graphicx} % DO NOT CHANGE THIS
% \urlstyle{rm} % DO NOT CHANGE THIS
% \def\UrlFont{\rm} % DO NOT CHANGE THIS
% \usepackage{graphicx}  % DO NOT CHANGE THIS
% \usepackage{natbib}  % DO NOT CHANGE THIS
% \usepackage{caption}  % DO NOT CHANGE THIS
% \frenchspacing % DO NOT CHANGE THIS
% \setlength{\pdfpagewidth}{8.5in} % DO NOT CHANGE THIS
% \setlength{\pdfpageheight}{11in} % DO NOT CHANGE THIS
% %
% % Keep the \pdfinfo as shown here. There's no need
% % for you to add the /Title and /Author tags.
% \pdfinfo{
% /TemplateVersion (2025.1)
% }
% \end{verbatim}\end{scriptsize}
% \end{quote}

%\clearpage
\bibliography{bib}

\appendix
\section*{Appendix}

\section{Implementation details}
\label{app:impl_details}

The source code for this study can be found in the zip file included in the supplementary material. This zip file contains a USAGE.md file that provides instructions on how to run the algorithm. The code in the zip file is based on the GitHub repositories of \citet{yehuda2022active} and \citet{munjal2022towards}. Our GitHub repository will be made public upon acceptance.

% \href{https://github.com/avihu111/TypiClust}{\texttt{https://github.com/avihu111/TypiClust}}
% \\

\MethodName\ necessitates the computation of the initial delta for each dataset representation. The values utilized in our experiments can be found in Table~\ref{tab:init_deltas_val}. The $\delta$ values of CIFAR-10 and CIFAR-100 are sourced from \citep{yehuda2022active}. Information on the computation of $\delta$ values for other datasets can be located in \app\ref{app:delta_initialization}.

\begin{table}[ht]
  \centering
  \scriptsize
  \begin{tabular}{llll}
    \toprule    
    Dataset&  SSL method&  $\delta_0$ \\ 
    \midrule
    STL-10&  SimCLR&  0.55\\  
    SVHN&  SimCLR&  0.4\\  
    CIFAR-10&  SimCLR&  0.75\\  
    CIFAR-100&  SimCLR&  0.65\\  
    ImageNet-50&  DINOv2&  0.75\\  
    ImageNet-100&  DINOv2&  0.7\\  
    ImageNet-200&  DINOv2&  0.65\\ 
    CIFAR-100&  MOCOv2+&  0.5\\  
    CIFAR-100&  BYOL&  0.5\\  
    CIFAR-100&  Barlow Twins&  0.55\\  
    \bottomrule
  \end{tabular}
  \caption{Initial delta values used in experiments}
  \label{tab:init_deltas_val}
\end{table}

Additionally, throughout our experiments we used identical algorithm parameters, distinguishing between those for datasets with smaller class amounts (10) and larger ones (50+). These parameters encompass the adaptive purity threshold $\tau$ from Alg~\ref{alg:Delta_Adjustment} and the logistic function parameters for the competence score $\cS(\sL, \Delta)$ in (\ref{def:competence_score}), $(a, k)$. Specifically, we employed $\tau = 0.2~\text{cover} + 0.4$, with logistic parameters $a = 0.9$ and $k = 30$ for CIFAR-10, SVHN, and STL-10, and $a = 0.8$ and $k = 30$ for CIFAR-100 and ImageNet subsets. The $\delta$ resolution for the binary search in \MethodName\ was set to $0.05$. Our ablation study, detailed in \app\ref{app:hyper_parameters_exp}, demonstrates that these selections have negligible impact on performance.

\subsection{Supervised training}
\label{app:supervised_hyper_params}
When training on STL-10, SVHN, CIFAR-10 and CIFAR-100 datasets, we utilized a ResNet-18 architecture trained for 200 epochs. Our optimization strategy involved using an SGD optimizer with a Nesterov momentum of 0.9, weight decay set to 0.0003, and cosine learning rate scheduling starting at a base rate of 0.025. Training was performed with a batch size of 100 examples, and we applied random cropping and horizontal flips for data augmentation. For an illustration of these parameters in use, refer to \citep{munjal2022towards}.

When training ImageNet-50, we used the same hyper-parameters, only changing the base learning rate to $0.01$, the batch size to $50$ and the epoch amount to $50$.

When training ImageNet-100/200, we used the same hyper-parameters as ImageNet-50, only changing the epoch amount to $100$.

The train and test partitions followed the original sets from the corresponding paper, with $10\%$ of the train set used for validation in all datasets except for ImageNet-200, where $5\%$ was used for validation.

\subsection{Semi-supervised training}
\label{app:semi_supervised_hyper_params}
When training FlexMatch \citep{zhang2021flexmatch}, we used the semi supervised framework by \citep{usb2022}, and the AL framework by \citep{munjal2022towards}. All experiments involved 3 repetitions.

We relied on the standard hyper-parameters used by FlexMatch \citep{zhang2021flexmatch}.
Specifically, we trained WideResNet-28 for 512 epochs using the SGD optimizer, with $0.03$ learning rate, $64$ batch size, $0.9$ SGD momentum, $0.999$ EMA momentum, $0.001$ weight decay and $2$ widen factor. 

\subsection{Self-supervised feature extraction}
\label{app:SSL_feature_extraction}
\myparagraph{STL-10, SVHN, CIFAR-10, CIFAR-100.}
To extract semantically meaningful features, we trained SimCLR using the code provided by \citep{van2020scan} for STL-10, SVHN, CIFAR-10 and CIFAR-100. Specifically, we used ResNet-18 \citep{DBLP:conf/cvpr/HeZRS16} with an MLP projection layer to a $128$-dim vector, trained for 512 epochs. All the training hyper-parameters were identical to those used by SCAN (all details can be found in \citep{van2020scan}). After training, we used the $512$ dimensional penultimate layer as the representation space.

\myparagraph{Representation learning: ImageNet-50/100/200.}
We extracted features from the official (ViT-S/14) DINOv2 weights pre-trained on ImageNet \citep{oquab2023dinov2}. We employed the L2-normalized penultimate layer for the embedding, which has a dimensionality of $384$.

\myparagraph{Ablation embeddings -- CIFAR-100.}
We extracted more features as BYOL \citep{grill2020bootstrap}, MOCOv2+ \citep{he2020momentum}, and Barlow Twins \citep{zbontar2021barlow} using pre-trained weights from \citep{JMLR:v23:21-1155}. 

\section{Purity Calculation in nNN}
\label{app:nNN_explanation}

When transitioning from 1NN to nNN, the structure of the problem remains unchanged, with the key difference being that the radius $\delta$ is adjusted for each labeled example. Next, we demonstrate that the purity of each ball in the nNN problem can be precisely computed using these $\delta$-values. This is because when a labeled example \(c \in \sL\) assigns a label to a point $x$, it must also cover $x$. Therefore, any mislabeling by $c$ would reduce the purity, as illustrated by the following statement:

\begin{proposition}
If \(c \in \sL\) is labeling \(x\) and \(x \in C(\sL, \Delta)\), then \(x \in B_{\delta_c}(c)\).
\end{proposition}

\begin{proof}
Let \(c \in \sL\) be the nearest neighbor (nNN) of \(x\), and let \(\delta_c\) be the corresponding radius for \(c\). By definition, \(f^{nNN}(x) = f(x)\), and since \(c\) is labeled, we have \(f(c) = f^{nNN}(c)\equiv f^{N}(c)\). 

Given that \(x \in C(\sL, \Delta)\), there exists a labeled example \(c' \in \sL\) that covers \(x\), meaning \(d(x, c') < \delta_{c'}\). Suppose that \(c \neq c'\). Since \(c\) is the nNN of \(x\), it follows that:
\[
\frac{d(x, c)}{\delta_c} < \frac{d(x, c')}{\delta_{c'}}
\]
Given \(d(x, c') < \delta_{c'}\), we get:
\[
\frac{d(x, c)}{\delta_c} < 1 \Rightarrow d(x, c) < \delta_c
\]
Thus, \(x \in B_{\delta_c}(c)\), meaning \(c\) covers \(x\) within the radius \(\delta_c\).

This implies that if \(x\) were wrongly labeled, \(c\) would still cover \(x\), indicating lower purity in \(B_{\delta_c}(c)\).
\end{proof}

\section{Hyper-parameters exploring}
\label{app:hyper_parameters_exp}
\myparagraph{Theoretical basis for the Choice of $\boldsymbol{a}$ and $\boldsymbol{k}$.} The parameter $a$ represents the center of the sigmoid function. Our objective is for the algorithm to assign increased weight to $\mF$ only after it has covered a substantial portion of the dataset. Consequently, we choose $a \in (0.5, 1)$. The parameter $k$ controls the steepness of the sigmoid curve. The function must yield 0 in the absence of coverage and approach 1 as coverage becomes complete. To achieve this, we configure the numerator of the fraction to be equal to the denominator when the cover's probability is 1. Although the function cannot pass precisely through $(0,0)$, it can asymptotically approach $(0, \varepsilon)$ for an infinitesimally small $\varepsilon \in \Re$. 

We now investigate the relationship between this approximation error and the parameter $k$. Define the function \( f(x) \) as $\cS$ for simplicity:
\[
f(x) = \frac{1 + e^{-k(1-a)}}{1 + e^{-k(x-a)}}
\]
The limit of $f(x)$ as \( x \) approaches 0 is
\[
\lim_{x \to 0} f(x) = \frac{1 + e^{-k(1-a)}}{1 + e^{ka}}
\]
For the function to approach 0, we require:
\[
\frac{1 + e^{-k(1-a)}}{1 + e^{ka}} \ll 1
\]
This implies:
\[
1 + e^{-k(1-a)} \ll 1 + e^{ka}
\]
\[
e^{-k(1-a)} \ll e^{ka}
\]
\[
e^{-k} \ll e^{ka}
\]
\[
e^{-k} \ll 1 \tag{*}
\]

Next, we calculate \( f(0) \):
\[
f(0) = \frac{1 + e^{-k(1-a)}}{1 + e^{ka}} = \varepsilon
\]
Solving for \( k \):
\[
1 + e^{-k(1-a)} = \varepsilon (1 + e^{ka})
\]
\[
1 - \varepsilon = \varepsilon e^{ka} - e^{-k(1-a)}
\]
\[
1 - \varepsilon = e^{ka} (\varepsilon - e^{-k})
\]
\[
e^{ka} = \frac{1 - \varepsilon}{\varepsilon - e^{-k}}
\]

Approximating $e^{ka}$ using (*):
\[
e^{ka} \approx \frac{1 - \varepsilon}{\varepsilon}
\]
So:
\[
ka \approx \ln \left(\frac{1 - \varepsilon}{\varepsilon}\right)
\]
Finally:
\[
k \approx \frac{1}{a} \ln \left(\frac{1}{\varepsilon} - 1\right) \approx \frac{\ln \left(\frac{1}{\varepsilon}\right)}{a}
\]

To minimize the approximation error, we should choose a larger \( k \). In our experiments with \( a = 0.8 \text{ or } 0.9 \) and \( \varepsilon = 10^{-10} \), we determined:
\[
k(a=0.9, \varepsilon=10^{-10}) \approx 25.58
\]
\[
k(a=0.8, \varepsilon=10^{-10}) \approx 28.78
\]
The value of $\varepsilon$ was chosen to be asymptotically smaller than the dimension of the initial graphs constructed for all the datasets. In accordance, we chose \( k = 30 \).

\myparagraph{Experimental assessment of \MethodName\ hyper-parameter consistency}
As described before, \MethodName\ has 3 hyper-parameters: $a, k \text{ and } \tau$. When running the main experiments described in Section~\ref{sec:main_results}, we maintained consistency by employing identical parameters for STL-10, SVHN, and CIFAR-10, as well as for CIFAR-100 and the subsets of ImageNet. This approach helps to demonstrate that the choice of the hyper-parameters is not critical and does not significantly affect the results. Here, we conduct experiments with \MethodName\ using STL-10 and CIFAR-100 datasets and several values for $a$ and $k$ from the definition of the competence score $\cS(\sL, \Delta)$ in (\ref{def:competence_score}). We observe that while certain choices may be slightly better than others, all choices yield similar results, and \MethodName\ consistently outperforms or matches the performance of previous methods.

\begin{figure}[htb]  
\center
\begin{subfigure}{1\columnwidth}
% legends
\includegraphics[width=\linewidth]{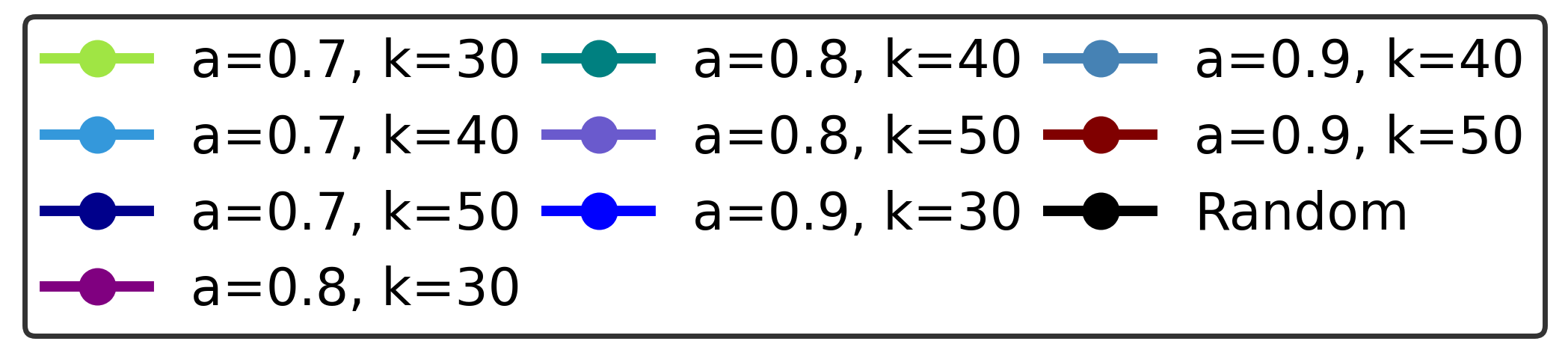}
\label{app:fig:params_exp:a}
\end{subfigure}
\begin{subfigure}{.49\columnwidth}
\includegraphics[width=\linewidth]{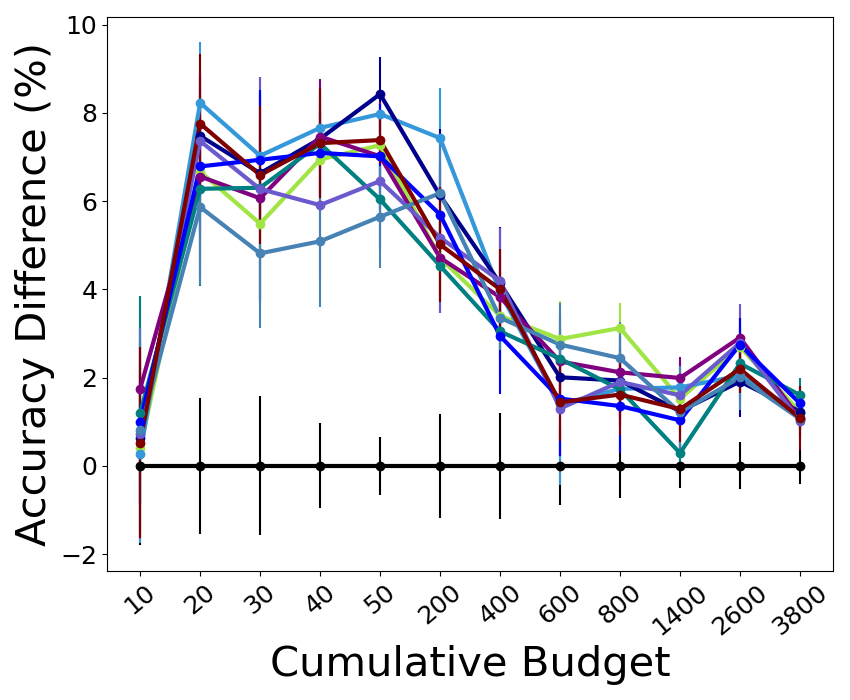}
\caption{STL-10} 
\label{app:fig:params_exp:b}
\end{subfigure}
\begin{subfigure}{.49\columnwidth}
\includegraphics[width=\linewidth]{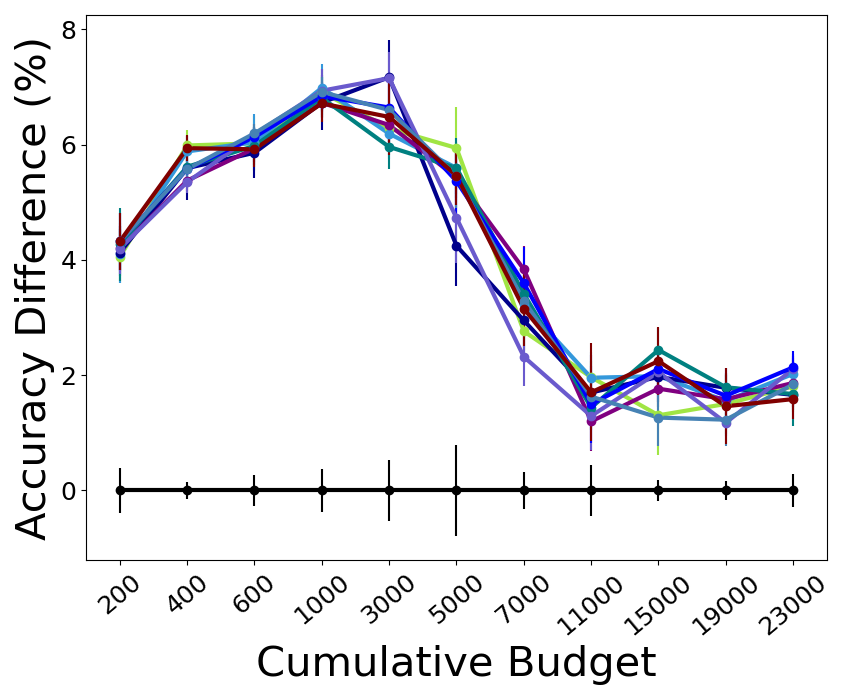}
\caption{CIFAR-100} 
\label{app:fig:params_exp:c}
\end{subfigure} 
\caption{Evaluation of \MethodName\ across different settings of the logistic function hyper-parameters a and k. Each plot displays the mean and standard error over $3-5$ repetitions. The findings indicate that fine-tuning these parameters has negligible effects on performance.}
\label{app:fig:params_exp}
\vspace{-.5cm}
\end{figure}

\section{Time and space complexity of \MethodName}
\label{app:complexity_analysis}

% During neural network training, \MethodName\ is invoked once to determine the optimal subset for human annotation, which is then utilized for network training.

% Prior to active sampling, a preliminary step involves selecting the initial $\delta$ for the first adjacency graph. This procedure is detailed in \citep{yehuda2022active}.

% Following this initial step and preceding the active sampling, the algorithm proceeds with customizing appropriate $\delta$ values for each labeled example from the previous iteration.

% After this step, the active sampling commences and encompasses several subsequent stages. The comprehensive process is elaborated upon in Section~\ref{sec:method}.

For the complexity analysis below, let $n$ represent the number of examples in the combined unlabeled and labeled pool $|\sU\cup\sL|$, $d$ the dimension of the data embedding space, $b$ - the given budget ($|\sL|$) and $q$ - the size of the query set. As previously stated, \MethodName\ can be divided into three distinct steps:

\subsection{Selection of initial $\boldsymbol{\delta}$ delta}
\label{sub:initial_delta_comp}
\myparagraph{Time Complexity}: The process begins by creating $t$ adjacency graphs, each corresponding to a different $\delta$ value being evaluated. Here, $t$ denotes the number of $\delta$'s being examined.
The time complexity of each graph generation is $O(n^2 \cdot d)$ time. The computation of purity requires the prediction of pseudo-labels by k-means at a cost of $O(n^2)$. The complexity of computing ball purity for one example is $O(n^2)$, so totally the ball purity computing for one graph takes $O(n^3)$. This adds up to overall time complexity of $O(tdn^2 + n^2 + tn^3) = O(tn^3)$. We can construct the initial graph using the largest \(\delta\)-value and then, instead of building a new graph for each \(\delta\)-value in \(O(tdn^2)\), we can filter distances above the current \(\delta\) in \(O(dn^2 + n^2)\). However, the total complexity remains \(O(tn^3)\). Although it may seem significant, this step doesn't occur during the active learning algorithm process. Additionally, it's possible to confine the data employed for purity computation to the densest points, effectively lowering the complexity to $O(dn^2)$. This approach mandates the use of at least the number of points equivalent to the number of classes for purity computation. This technique yields similar $\delta_0$ values (consistent for CIFAR-10, CIFAR-100, and ImageNet-50, and with a variation of 0.05 for ImageNet-100/200 and STL-10 and 0.15 for SVHN).

\myparagraph{Space complexity}: Naively, the space complexity is $O(n^2)$, which might be impractical for large datasets like ImageNet. However, \citep{yehuda2022active} demonstrates that utilizing a sparse matrix in coordinate list (COO) format and using limited $\delta$ values, the space complexity becomes $O(|E|)$, where $E$ represents the set of edges in the graph. Although $O(|E|)$ remains $O(n^2)$ in the worst-case scenario, in practice, the average vertex degree with a limited radius is less than $n$.

\subsection{$\boldsymbol{\Delta}$-Adjustment}
\myparagraph{Time complexity}:
Each iteration of \MethodName\ starts with customizing $\delta$ values for each labeled example from the last active step. In the worst case, there are $|\sL|$ examples. Customizing $\Delta$ involves building the adjacency graph using $\delta_{\text{max}}$, running all examples on the model to obtain their predictions, and applying binary search over $\delta$ values between 0 and $\delta_{\text{max}}$ for each example from the last active step. The binary search is performed on a continuous parameter ($\delta$) with a search resolution $r$. This implies that there are $\frac{\delta_{\text{max}}}{r}$ available values for $\delta$. For each $\delta$ value, the algorithm computes its purity value.
Overall, this process takes $O(dn^2 + n + |\sL| \cdot \log(\frac{\delta_{\text{max}}}{r})\cdot n^2) = O(dn^2 + b \cdot \log(\frac{\delta_{\text{max}}}{r})\cdot n^2)$. In practical terms, due to the vectorization of these processes, it requires approximately 27 minutes to construct an adjacency graph for ImageNet-200 and update deltas for 1000 examples on a single CPU.

\myparagraph{Space complexity}: Similar to the space complexity analysis in \app\ref{sub:initial_delta_comp}, which is $O(n^2)$.

\subsection{Active sampling}
\myparagraph{Time complexity}: Active sampling involves balancing model min-margin and example density based on the current adjacency graph. Initially, the margin of each example is computed using the softmax output from the model's last layer, requiring $O(n)$ time. Subsequently, the current adjacency is created using $\delta_{\text{avg}}$, which is the average over the $\sL$ $\delta$ values list - $\Delta$. As discussed in \app\ref{sub:initial_delta_comp}, this step has a time complexity of $O(dn^2)$.
Following this, the iterative process for selecting a single sample includes the following steps:
\begin{itemize}[noitemsep]
    \item Calculating node degrees -- $O(|E|)$ time.
    \item Finding the node with maximal degree -- $O(n)$ time.
    \item Removing incoming edges from the graph for covered points -- $O(|E|)$ time.
\end{itemize}

Samples are iteratively selected from the current sparse graph, with incoming edges to newly covered samples being removed. Unlike adjacency graph creation, sample selection cannot be parallelized, as each step depends on the previous one.
Overall, the time complexity for selecting a sample is $O(|E|+n)$, resulting in a worst-case overall complexity of $O(dn^2)$. However, as more points are selected, the removal of edges speeds up the selection of later samples. Practically, it consumes about 15 minutes to construct an adjacency graph and select 1000 samples from ImageNet-200 on a single CPU.

\myparagraph{Space complexity}: Similar to the previous space complexity analysis -- $O(n^2)$.

\section{Additional empirical results}
\label{app:add_emp_results}

\subsection{$\boldsymbol{\delta_0}$ Initialization}
\label{app:delta_initialization}
To set the initial value $\delta_0$, we adopted the method outlined in \citep{yehuda2022active}, as detailed in Section~\ref{sec:method}. Fig.~\ref{app:fig:init_delta} displays the purity function across various $\delta$ values, along with the selected $\delta$ values for each dataset. As mentioned previously, the $\delta$ values for CIFAR-10 and CIFAR-100 are derived from \citep{yehuda2022active}.

\begin{figure}[htb] % htb
\center
\begin{subfigure}{.49\columnwidth}
\includegraphics[width=\linewidth]{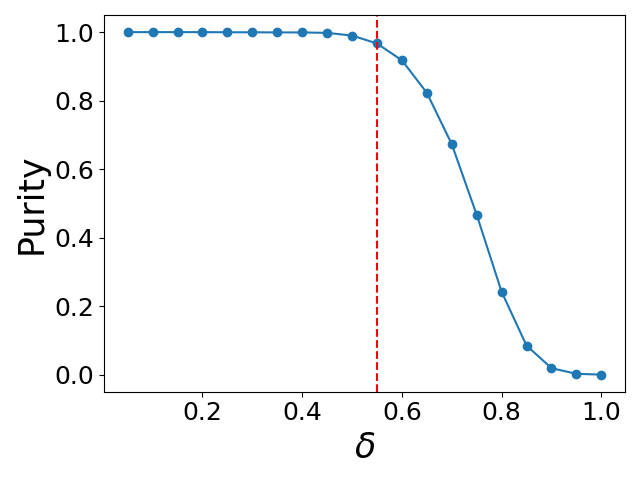}
\caption{STL-10} 
\label{app:fig:init_delta:a}
\end{subfigure}
\begin{subfigure}{.49\columnwidth}
\includegraphics[width=\linewidth]{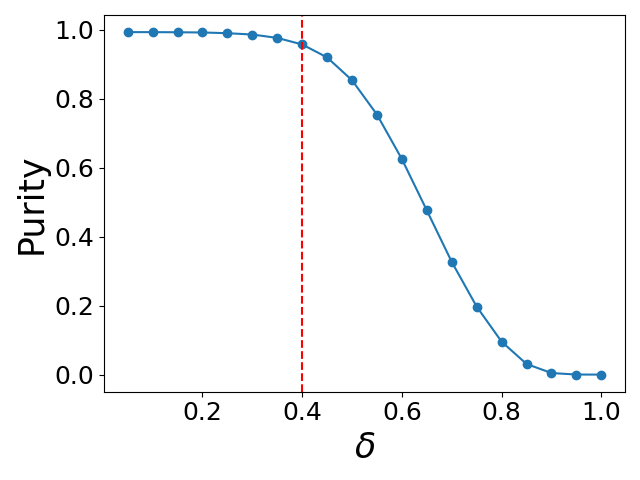}
\caption{SVHN} 
\label{app:fig:init_delta:b}
\end{subfigure}
\begin{subfigure}{.49\columnwidth}
\includegraphics[width=\linewidth]{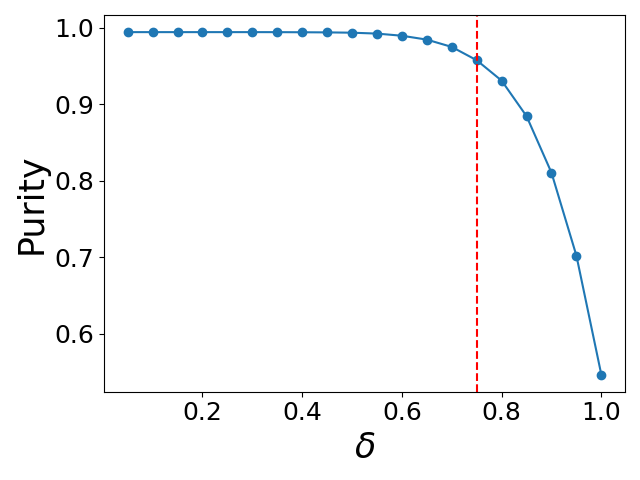}
\caption{ImageNet-50} 
\label{app:fig:init_delta:c}
\end{subfigure}
\begin{subfigure}{.49\columnwidth}
\includegraphics[width=\linewidth]{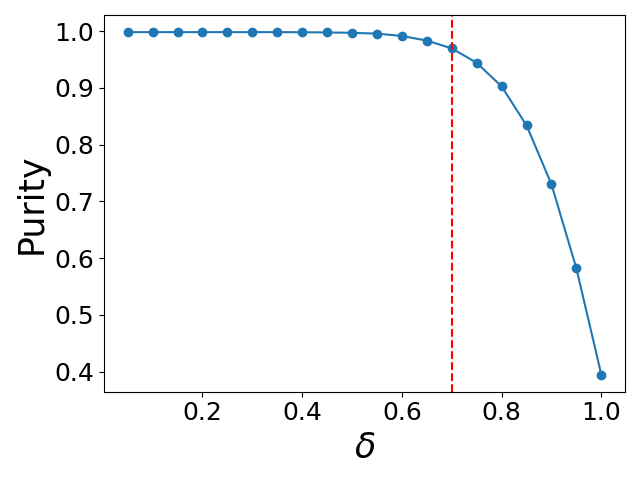}
\caption{ImageNet-100}
\label{app:fig:init_delta:d}
\end{subfigure}
% \begin{subfigure}{.325\textwidth}
% \includegraphics[width=\linewidth]{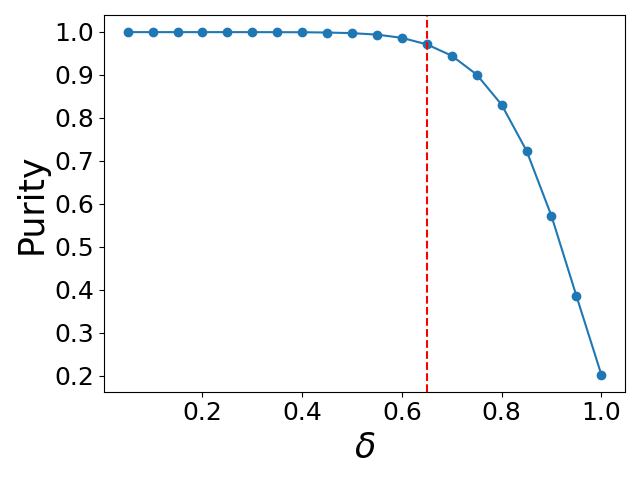}
% \caption{ImageNet-200} 
% \label{app:fig:init_delta:e}
% \end{subfigure}
\caption{$\pi(\delta)$, estimated from the unlabeled data and using k-means algorithm for labeling. The dashed line indicates the highest $\delta$, after which purity drops below $\alpha=0.95$.
}
\label{app:fig:init_delta}
\end{figure}

\subsection{$\boldsymbol{\Delta}$ distribution evolution throughout training}

\label{app:Delta_distribution}
The experiment in Fig.~\ref{app:fig:Del_dist} illustrates the distribution of $\Delta$ throughout the active selection and training phases of \MethodName. The significant standard deviation of $\Delta$ compared to the minor standard error underscores the significance of employing a dynamic method that assigns a distinct radius to each ball.
\begin{figure}[htb]  
\center
\begin{subfigure}{.49\columnwidth}
\includegraphics[width=\linewidth]{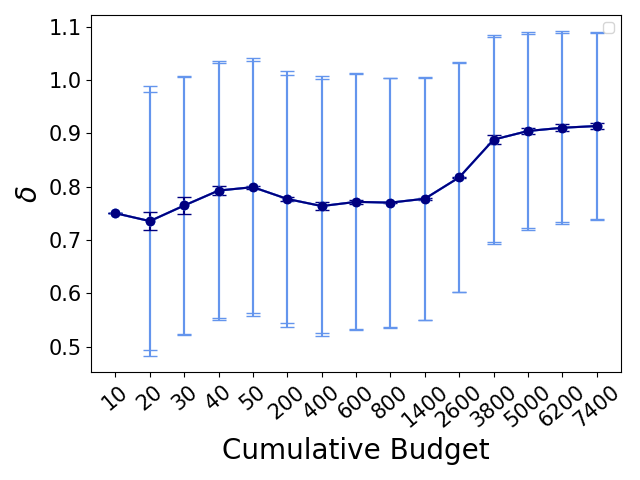}
\caption{CIFAR-10} 
\label{app:fig:Del_dist:a}
\end{subfigure}
\begin{subfigure}{.49\columnwidth}
\includegraphics[width=\linewidth]{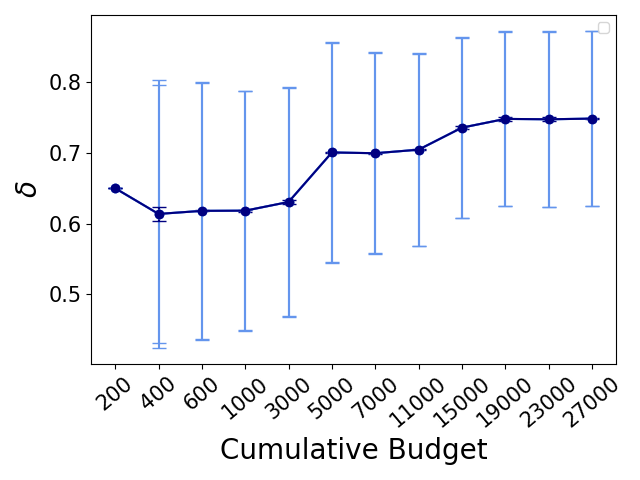}
\caption{CIFAR-100} 
\label{app:fig:Del_dist:b}
\end{subfigure}
% \begin{subfigure}{.325\textwidth}
% \includegraphics[width=\linewidth]{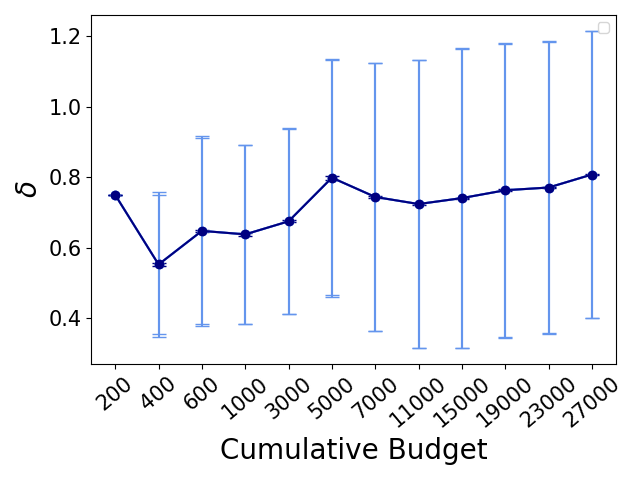}
% \caption{ImageNet-50} 
% \label{app:fig:Del_dist:c}
% \end{subfigure}
\caption{The $\Delta$ distribution through \MethodName\ algorithm.  Each plot displays the distribution over 5-10 repetitions. The results indicate that as the emphasis shifts towards maximizing the purity of sample balls, rather than solely focusing on maximum coverage, the radii adapt accordingly. This, combined with the improved performance observed with \MethodName, highlights the necessity of the dynamic algorithm in enhancing the effectiveness of active sampling.}
\label{app:fig:Del_dist}
\end{figure}

\begin{table*}[t!]
\centering
\scriptsize
\setlength{\tabcolsep}{1pt} % Adjust horizontal space between cells
\begin{tabularx}{0.995\textwidth}{l*{11}{>{\centering\arraybackslash}X} r}
\toprule
$|\sL|$ & Random & Prob Cover & BADGE & BALD & Coreset & Uncertainty & Entropy & DBAL & Margin & LDM-s & DCoM \\ 
\midrule
10 & 14.53±0.60 & \textbf{20.39±0.23} & 15.17±0.94 & 10.81±0.61 & 14.65±0.74 & 12.99±0.77 & 12.78±0.72 & 11.74±0.69 & 14.39±0.88 & - & 19.72±0.40 \\ 
20 & 16.82±0.44 & \textbf{24.64±0.73} & 17.85±0.77 & 13.79±0.89 & 16.09±0.74 & 16.38±0.52 & 17.07±0.62 & 17.00±0.49 & 17.33±0.85 & - & 22.28±0.75 \\ 
30 & 18.82±0.71 & \textbf{27.19±0.25} & 20.38±0.67 & 16.25±0.83 & 18.22±0.93 & 18.00±0.68 & 19.26±0.44 & 18.34±0.50 & 18.55±0.87 & - & \textbf{26.98±0.35} \\ 
40 & 20.89±0.55 & \textbf{29.46±0.38} & 22.19±0.45 & 17.93±0.67 & 19.79±0.61 & 19.00±0.70 & 19.98±0.60 & 20.30±0.35 & 20.61±0.68 & - & \textbf{28.91±0.46} \\ 
50 & 22.75±0.42 & 29.80±0.21 & 22.41±0.35 & 18.36±0.54 & 21.26±0.63 & 21.63±0.70 & 22.17±0.48 & 21.06±0.45 & 22.00±0.61 & - & \textbf{30.39±0.12} \\ 
200 & 31.53±0.29 & \textbf{38.83±0.18} & 32.10±0.42 & 27.95±0.29 & 30.91±0.76 & 30.69±0.27 & 29.51±0.42 & 28.91±0.50 & 31.83±0.48 & - & \textbf{38.86±0.40} \\ 
400 & 38.64±0.36 & \textbf{45.36±0.51} & 37.92±0.37 & 35.10±0.33 & 37.85±0.70 & 36.48±0.46 & 36.71±0.38 & 36.01±0.32 & 38.65±0.62 & - & \textbf{45.72±0.27} \\ 
600 & 43.66±0.44 & \textbf{49.13±0.36} & 42.69±0.35 & 40.75±0.29 & 42.48±0.39 & 41.96±0.24 & 40.75±0.23 & 41.19±0.53 & 43.75±0.22 & - & \textbf{49.71±0.39} \\ 
800 & 46.84±0.29 & \textbf{51.49±0.37} & 46.34±0.32 & 44.12±0.49 & 46.21±0.25 & 45.50±0.17 & 43.89±0.31 & 44.93±0.42 & 46.97±0.40 & - & \textbf{52.10±0.58} \\ 
1,400 & 54.35±0.27 & 56.33±0.26 & 54.06±0.42 & 53.08±0.34 & 52.31±0.31 & 53.13±0.28 & 52.93±0.34 & 53.28±0.50 & 54.97±0.35 & 50.30±0.00 & \textbf{57.54±0.10} \\ 
2,600 & 62.93±0.31 & 62.16±0.18 & 64.67±0.31 & 64.19±0.24 & 61.94±0.40 & \textbf{64.98±0.59} & 64.62±0.46 & 63.53±0.44 & 64.54±0.45 & 57.01±0.01 & \textbf{65.76±0.21} \\ 
3,800 & 68.61±0.44 & 66.83±0.25 & \textbf{71.36±0.32} & 70.77±0.53 & 68.41±0.34 & \textbf{71.90±0.36} & \textbf{71.30±0.36} & 70.55±0.25 & \textbf{71.60±0.21} & 61.30±0.01 & \textbf{71.84±0.36} \\ 
5,000 & 73.61±0.23 & 70.95±0.23 & 75.93±0.21 & 75.86±0.27 & 72.92±0.36 & 75.82±0.31 & \textbf{76.13±0.31} & 75.24±0.47 & 75.46±0.31 & 64.09±0.00 & \textbf{76.36±0.05} \\ 
6,200 & 75.95±0.18 & 73.51±0.30 & \textbf{78.90±0.27} & 77.92±0.36 & 77.00±0.19 & \textbf{78.84±0.27} & 78.37±0.20 & \textbf{78.54±0.26} & \textbf{78.62±0.22} & 65.79±0.00 & \textbf{78.66±0.16} \\ 
7,400 & 78.49±0.14 & 75.71±0.35 & \textbf{80.88±0.22} & \textbf{80.94±0.16} & 79.39±0.30 & \textbf{81.19±0.15} & \textbf{81.02±0.19} & \textbf{81.10±0.24} & \textbf{80.76±0.29} & 67.28±0.01 & \textbf{81.30±0.34} \\ 
\bottomrule
\end{tabularx}
\begin{center} {CIFAR-10 dataset} \end{center}
% \end{table}
% % Cifar-100
% \begin{table}[htbp]
% \caption{Model accuracy for different sizes of labeled set $\sL$ and AL strategies}% on CIFAR-100 dataset}
% \label{tab:res4}
% \centering
% \tiny
\setlength{\tabcolsep}{1pt} % Adjust horizontal space between cells
\begin{tabularx}{0.995\textwidth}{l*{11}{>{\centering\arraybackslash}X} r}
\toprule
$|\sL|$ & Random & Prob Cover & BADGE & BALD & Coreset & Uncertainty & Entropy & DBAL & Margin & LDM-s & DCoM \\ 
\midrule
200 & 6.39±0.19 & \textbf{10.79±0.08} & 6.31±0.09 & 3.30±0.11 & 6.39±0.19 & 2.59±0.05 & 2.74±0.17 & 2.61±0.18 & 5.09±0.37 & - & \textbf{10.59±0.19} \\ 
400 & 8.74±0.07 & 13.53±0.22 & 8.27±0.14 & 6.63±0.23 & 8.62±0.17 & 4.59±0.23 & 4.68±0.23 & 4.41±0.30 & 7.58±0.17 & - & \textbf{14.11±0.11} \\ 
600 & 10.73±0.13 & 15.86±0.16 & 10.29±0.27 & 8.33±0.31 & 10.39±0.25 & 6.52±0.38 & 6.32±0.27 & 5.94±0.28 & 10.20±0.16 & - & \textbf{16.66±0.11} \\ 
1,000 & 13.49±0.18 & 19.49±0.20 & 13.66±0.26 & 11.42±0.26 & 12.94±0.21 & 9.98±0.38 & 9.31±0.22 & 8.47±0.35 & 13.28±0.19 & - & \textbf{20.21±0.07} \\ 
3,000 & 24.02±0.27 & 29.31±0.42 & 24.24±0.28 & 22.10±0.11 & 23.00±0.28 & 21.93±0.26 & 20.15±0.33 & 20.29±0.23 & 23.71±0.31 & - & \textbf{30.36±0.08} \\ 
5,000 & 31.52±0.39 & 34.64±0.15 & 32.18±0.33 & 29.58±0.38 & 31.41±0.26 & 30.60±0.40 & 29.15±0.46 & 29.60±0.28 & 31.66±0.23 & - & \textbf{36.95±0.10} \\ 
7,000 & 37.69±0.16 & 38.36±0.13 & 38.09±0.10 & 34.36±0.20 & 38.24±0.16 & 35.87±0.19 & 35.11±0.11 & 35.67±0.35 & 38.16±0.17 & 31.85±0.00 & \textbf{41.52±0.25} \\ 
11,000 & 45.66±0.22 & 44.69±0.20 & \textbf{47.00±0.16} & 44.37±0.28 & 46.53±0.18 & 45.23±0.20 & 45.01±0.17 & 44.31±0.24 & 46.37±0.36 & 40.88±0.01 & \textbf{46.86±0.28} \\ 
15,000 & 50.57±0.09 & 49.58±0.46 & \textbf{52.42±0.17} & 51.02±0.09 & \textbf{52.19±0.29} & 51.82±0.16 & 51.43±0.21 & 51.30±0.35 & \textbf{52.37±0.19} & 46.59±0.01 & \textbf{52.33±0.22} \\ 
19,000 & 54.95±0.09 & 52.90±0.14 & 56.32±0.07 & 55.55±0.27 & \textbf{56.72±0.31} & \textbf{56.55±0.21} & 55.76±0.31 & 55.75±0.14 & \textbf{56.73±0.24} & 49.41±0.01 & \textbf{56.53±0.07} \\ 
23,000 & 57.74±0.14 & 56.29±0.21 & \textbf{59.60±0.16} & \textbf{59.96±0.35} & \textbf{59.59±0.22} & \textbf{59.79±0.23} & \textbf{59.91±0.08} & \textbf{59.72±0.21} & \textbf{59.88±0.21} & 52.48±0.00 & \textbf{59.60±0.33} \\ 
\bottomrule
\end{tabularx}
\begin{center} {CIFAR-100 dataset} \end{center}
% \end{table}
% % ImageNet-50
% \begin{table}[htbp]
% \caption{Model accuracy for different sizes of labeled set $\sL$ and AL strategies}% on ImageNet-50 dataset}
% \label{tab:res5}
% \centering
% \tiny
\setlength{\tabcolsep}{1pt} % Adjust horizontal space between cells
\begin{tabularx}{0.995\textwidth}{l*{11}{>{\centering\arraybackslash}X} r}
\toprule
$|\sL|$ & Random & Prob Cover & BADGE & BALD & Coreset & Uncertainty & Entropy & DBAL & Margin & DCoM \\ 
\midrule
200 & 8.25±0.65 & \textbf{12.33±0.34} & 8.82±0.18 & 4.46±0.31 & 8.31±0.52 & 4.26±0.26 & 4.11±0.09 & 4.26±0.24 & 6.50±0.94 & \textbf{12.27±0.04} \\ 
400 & 12.18±0.29 & 16.64±0.29 & 12.73±0.42 & 9.49±0.84 & 9.40±0.59 & 7.74±0.41 & 7.01±0.37 & 6.45±0.33 & 11.48±0.72 & \textbf{18.03±0.21} \\ 
600 & 14.85±0.53 & 20.38±0.47 & 15.69±0.38 & 11.21±0.41 & 10.68±0.35 & 10.38±0.27 & 11.28±0.54 & 10.34±0.46 & 14.74±0.35 & \textbf{21.77±0.62} \\ 
1,000 & 19.71±0.53 & 25.76±0.34 & 20.91±0.43 & 14.85±0.35 & 13.47±0.32 & 16.30±0.60 & 16.52±0.12 & 15.10±0.65 & 19.47±0.45 & \textbf{26.88±0.00} \\ 
3,000 & 39.06±0.21 & 39.98±0.23 & 38.51±0.15 & 32.40±0.40 & 28.80±0.45 & 32.52±0.63 & 32.97±0.34 & 31.60±0.60 & 37.00±0.65 & \textbf{43.40±0.29} \\ 
5,000 & 48.33±0.27 & 45.08±0.16 & 49.47±0.26 & 45.04±0.47 & 40.10±0.31 & 42.05±0.29 & 41.65±0.22 & 42.41±0.52 & 47.24±0.50 & \textbf{50.76±0.08} \\ 
7,000 & \textbf{55.12±0.09} & 47.34±0.44 & \textbf{55.25±0.34} & 53.48±0.38 & 49.48±0.19 & 51.30±0.68 & 49.74±0.21 & 49.67±0.46 & \textbf{54.78±0.29} & \textbf{55.17±0.34} \\ 
11,000 & 63.46±0.15 & 52.79±0.17 & \textbf{64.68±0.15} & 63.65±0.32 & 60.13±0.11 & 60.10±0.50 & 60.89±0.34 & 60.58±0.25 & 63.90±0.22 & 63.51±0.31 \\ 
15,000 & 69.42±0.20 & 55.04±0.06 & \textbf{70.27±0.25} & \textbf{70.08±0.21} & 67.38±0.17 & 67.49±0.19 & 66.34±0.16 & 66.65±0.23 & 69.36±0.24 & \textbf{70.19±0.45} \\ 
19,000 & 72.54±0.21 & 62.66±0.19 & \textbf{73.70±0.29} & \textbf{73.63±0.21} & 71.87±0.14 & 72.78±0.41 & 71.55±0.27 & 71.85±0.17 & 73.52±0.23 & \textbf{74.29±0.50} \\ 
23,000 & 75.99±0.16 & 70.06±0.27 & 76.65±0.11 & \textbf{77.17±0.06} & 75.30±0.24 & 76.37±0.29 & 75.73±0.25 & 75.64±0.27 & \textbf{77.12±0.30} & 76.68±0.20 \\ 
\bottomrule
\end{tabularx}
\begin{center} {ImageNet-50 dataset} \end{center}
% \end{table}
% ImageNet-100
% \begin{table}[H]
% \caption{Model accuracy for different sizes of labeled set $\sL$ and AL strategies} % on ImageNet-100 dataset}
% \centering
% \tiny
\setlength{\tabcolsep}{1pt} % Adjust horizontal space between cells
\begin{tabularx}{0.995\textwidth}{l*{11}{>{\centering\arraybackslash}X} r}
\toprule
$|\sL|$ & Random & Prob Cover & BADGE & BALD & Coreset & Uncertainty & Entropy & DBAL & Margin & DCoM \\ 
\midrule
200 & 4.58±0.13 & \textbf{6.72±0.07} & 4.67±0.16 & 1.99±0.22 & 4.58±0.13 & 2.01±0.06 & 1.99±0.10 & 1.83±0.06 & 1.84±0.12 & \textbf{6.87±0.18} \\ 
400 & 6.61±0.17 & 9.30±0.23 & 6.42±0.23 & 4.39±0.39 & 5.53±0.16 & 3.50±0.32 & 4.17±0.33 & 3.65±0.25 & 4.71±0.10 & \textbf{10.59±0.26} \\ 
600 & 8.11±0.42 & 12.71±0.42 & 8.01±0.20 & 6.42±0.24 & 6.45±0.37 & 4.93±0.29 & 5.51±0.33 & 4.96±0.22 & 6.46±0.12 & \textbf{13.61±0.15} \\ 
1,000 & 11.43±0.17 & \textbf{17.71±0.27} & 11.56±0.38 & 9.42±0.19 & 9.04±0.66 & 7.97±0.82 & 8.39±0.18 & 7.45±0.18 & 10.43±0.11 & \textbf{17.53±0.23} \\ 
3,000 & 27.59±0.76 & 34.01±0.15 & 27.58±0.72 & 24.96±0.23 & 19.41±0.32 & 23.10±0.60 & 20.97±0.33 & 20.88±0.52 & 26.51±0.79 & \textbf{34.85±0.23} \\ 
5,000 & 39.25±0.26 & 41.91±0.20 & 39.13±0.06 & 36.50±0.25 & 29.35±0.39 & 33.78±0.46 & 31.89±0.24 & 31.25±0.21 & 38.37±0.10 & \textbf{44.09±0.83} \\ 
7,000 & 46.88±0.25 & 48.41±0.42 & 46.42±0.41 & 43.49±0.42 & 38.45±0.12 & 42.09±0.54 & 41.19±0.17 & 40.47±0.83 & 45.69±0.26 & \textbf{49.22±0.29} \\ 
11,000 & 55.31±0.26 & 54.01±0.15 & 56.61±0.11 & 55.29±0.32 & 50.63±0.33 & 53.25±0.31 & 52.52±0.26 & 53.09±0.29 & 56.86±0.10 & \textbf{58.45±0.15} \\ 
15,000 & 61.69±0.09 & 57.04±0.43 & 63.05±0.42 & 62.19±0.21 & 58.71±0.05 & 60.07±0.07 & 59.76±0.22 & 59.64±0.30 & 62.71±0.27 & \textbf{64.57±0.22} \\ 
19,000 & 66.39±0.26 & 59.69±0.16 & 66.82±0.33 & 66.19±0.19 & 64.77±0.32 & 65.59±0.27 & 64.65±0.23 & 64.93±0.23 & 66.89±0.28 & \textbf{67.60±0.08} \\ 
23,000 & 68.92±0.15 & 61.99±0.41 & \textbf{70.54±0.21} & 69.62±0.32 & 68.49±0.06 & 69.03±0.20 & 68.47±0.50 & 67.65±0.14 & \textbf{70.81±0.35} & \textbf{70.21±0.32} \\ 
\bottomrule
\end{tabularx}
%\vspace{-.5cm}
\begin{center} {ImageNet-100 dataset} \end{center}
% \end{table}
% % ImageNet-200
% \begin{table}[H]
% \caption{Model accuracy for different sizes of labeled set $\sL$ and AL strategies on ImageNet-200 dataset}
% \centering
% \tiny
\setlength{\tabcolsep}{1pt} % Adjust horizontal space between cells
\caption{Model accuracy for different sizes of labeled set $\sL$ and AL strategies.}
\label{tab:res0}
\end{table*}

% STL-10
\begin{table*}[t!]
\centering
\scriptsize
\begin{tabularx}{0.995\textwidth}{l*{11}{>{\centering\arraybackslash}X} r}
\toprule
$|\sL|$ & Random & Prob Cover & BALD & Coreset & Uncertainty & Entropy & DBAL & Margin & DCoM \\ 
\midrule
400 & 3.10±0.07 & \textbf{5.28±0.08} & 1.18±0.21 & 3.10±0.07 & 1.19±0.07 & 1.17±0.13 & 1.29±0.06 & 2.22±0.50 & \textbf{5.20±0.02} \\ 
800 & 4.61±0.10 & 7.56±0.15 & 2.90±0.29 & 3.68±0.12 & 2.06±0.24 & 2.20±0.19 & 2.15±0.20 & 3.85±0.34 & \textbf{8.39±0.07} \\ 
1,200 & 5.88±0.25 & 10.10±0.08 & 4.32±0.33 & 4.84±0.14 & 3.42±0.19 & 3.71±0.26 & 3.42±0.16 & 5.54±0.23 & \textbf{11.56±0.07} \\ 
2,000 & 9.51±0.33 & 15.25±0.26 & 6.73±0.26 & 6.78±0.15 & 5.72±0.41 & 6.15±0.04 & 5.72±0.15 & 9.17±0.34 & \textbf{16.27±0.08} \\ 
6,000 & 25.67±0.16 & \textbf{32.89±0.12} & 19.85±0.26 & 17.98±0.10 & 20.12±0.11 & 19.89±0.35 & 18.72±0.08 & 26.03±0.08 & \textbf{32.76±0.13} \\ 
10,000 & 37.49±0.26 & \textbf{42.00±0.16} & 31.36±0.46 & 28.31±0.10 & 32.13±0.16 & 31.57±0.28 & 31.00±0.59 & 37.33±0.08 & \textbf{41.97±0.44} \\ 
15,000 & 46.22±0.30 & 48.89±0.09 & 41.16±0.49 & 38.53±0.18 & 42.53±0.29 & 41.88±0.31 & 41.42±0.13 & 46.37±0.17 & \textbf{49.47±0.27} \\ 
20,000 & 52.20±0.19 & 51.70±0.32 & 48.79±0.58 & 46.59±0.15 & 49.50±0.25 & 48.07±0.36 & 47.89±0.13 & 53.01±0.19 & \textbf{54.25±0.26} \\ 
40,000 & 64.57±0.17 & 57.60±0.05 & 64.02±0.37 & 61.78±0.30 & 63.94±0.27 & 63.14±0.37 & 63.21±0.13 & 65.19±0.24 & \textbf{65.69±0.17} \\ 
\bottomrule
\end{tabularx}
%\vspace{-.5cm}
\begin{center} {ImageNet-200 dataset} \end{center}
\setlength{\tabcolsep}{1pt} % Adjust horizontal space between cells
\begin{tabularx}{0.995\textwidth}{l*{11}{>{\centering\arraybackslash}X} r}
\toprule
$|\sL|$ & Random & Prob Cover & BADGE & BALD & Coreset & Uncertainty & Entropy & DBAL & Margin & DCoM \\ 
\midrule
10 & \textbf{15.73±0.90} & \textbf{17.26±0.78} & 14.92±0.98 & 12.22±0.91 & \textbf{15.58±1.03} & 11.88±0.83 & 11.67±0.48 & 11.79±1.06 & 12.81±0.65 & \textbf{16.72±0.94} \\ 
20 & 16.68±0.77 & \textbf{22.81±0.28} & 18.24±0.46 & 15.28±0.98 & 18.25±0.97 & 16.05±0.94 & 18.48±0.67 & 18.19±0.63 & 15.65±1.35 & \textbf{23.47±0.75} \\ 
30 & 19.78±0.79 & 24.91±0.49 & 19.80±0.38 & 18.37±1.06 & 19.41±0.92 & 17.75±1.05 & 19.33±0.74 & 20.35±0.90 & 18.49±1.16 & \textbf{26.72±0.79} \\ 
40 & 21.49±0.48 & 26.56±0.48 & 21.57±0.48 & 21.25±0.95 & 20.45±1.00 & 20.86±0.77 & 21.45±0.74 & 21.70±0.88 & 20.97±0.71 & \textbf{28.58±0.69} \\ 
50 & 22.19±0.33 & \textbf{28.30±0.42} & 24.40±0.45 & 22.13±0.56 & 21.76±1.07 & 22.11±0.81 & 22.75±0.60 & 22.76±0.91 & 23.57±0.59 & \textbf{29.20±0.87} \\ 
200 & 36.04±0.59 & 40.91±0.33 & 35.50±0.57 & 32.89±0.50 & 33.22±0.55 & 33.44±0.76 & 34.38±0.58 & 34.95±0.46 & 36.05±0.53 & \textbf{41.74±0.49} \\ 
400 & 44.07±0.60 & \textbf{49.52±0.23} & 44.18±0.54 & 43.33±0.45 & 41.83±0.39 & 42.39±0.64 & 42.51±0.46 & 43.67±0.47 & 44.35±0.52 & 47.02±0.73 \\ 
600 & 50.02±0.44 & \textbf{52.65±0.35} & 50.10±0.44 & 49.56±0.32 & 48.91±0.32 & 49.46±0.48 & 48.85±0.38 & 49.23±0.49 & 49.27±0.59 & \textbf{51.54±0.86} \\ 
800 & 53.69±0.36 & \textbf{54.45±0.56} & \textbf{54.57±0.63} & \textbf{54.20±0.51} & \textbf{54.42±0.21} & 53.48±0.38 & 53.70±0.38 & 53.98±0.31 & \textbf{54.05±0.56} & \textbf{55.05±0.71} \\ 
1,400 & 62.00±0.26 & 61.54±0.30 & \textbf{63.38±0.25} & \textbf{62.82±0.51} & \textbf{62.72±0.42} & 62.42±0.27 & 62.24±0.23 & \textbf{63.04±0.23} & \textbf{63.11±0.25} & \textbf{63.04±0.57} \\ 
2,600 & 70.18±0.27 & 70.50±0.08 & \textbf{72.76±0.19} & \textbf{72.72±0.26} & 71.41±0.13 & 72.53±0.23 & 72.35±0.18 & \textbf{72.97±0.18} & 72.12±0.40 & \textbf{72.93±0.33} \\ 
3,800 & 75.51±0.21 & 72.97±0.17 & \textbf{76.75±0.15} & 76.69±0.11 & 75.75±0.09 & \textbf{76.80±0.21} & \textbf{77.05±0.15} & \textbf{76.85±0.09} & \textbf{76.91±0.10} & \textbf{76.92±0.18} \\ 
\bottomrule
\end{tabularx}
\begin{center} {STL-10 dataset} \end{center}
% \end{table}
% % SVHN
% \begin{table}[H]
% \caption{Model accuracy for different sizes of labeled set $\sL$ and AL strategies} % on SVHN dataset}
% \label{tab:res2}
% \centering
% \tiny
\setlength{\tabcolsep}{1pt} % Adjust horizontal space between cells
\begin{tabularx}{0.995\textwidth}{l*{11}{>{\centering\arraybackslash}X} r}
\toprule
$|\sL|$ & Random & Prob Cover & BADGE & BALD & Coreset & Uncertainty & Entropy & DBAL & Margin & LDM-s & DCoM \\ 
\midrule
10 & 11.36±0.50 & 11.80±0.08 & \textbf{13.26±0.79} & \textbf{13.84±1.64} & 11.36±0.50 & \textbf{11.41±0.85} & \textbf{11.06±1.38} & \textbf{11.66±0.56} & 10.80±0.66 & - & \textbf{12.62±0.60} \\ 
20 & 11.05±0.41 & 10.96±0.19 & 12.19±0.67 & 12.74±1.26 & 10.99±0.59 & 11.81±0.84 & 10.86±0.58 & \textbf{14.94±0.36} & 12.50±0.95 & - & 13.30±0.26 \\ 
30 & 11.16±0.36 & \textbf{12.19±0.52} & \textbf{12.12±0.51} & \textbf{14.22±2.01} & 11.10±0.29 & \textbf{12.66±1.28} & \textbf{11.86±0.65} & \textbf{13.50±0.46} & \textbf{11.56±0.79} & - & \textbf{13.91±0.32} \\ 
40 & 11.85±0.46 & 12.11±0.20 & 12.71±0.70 & \textbf{13.98±2.06} & 11.57±0.57 & \textbf{14.16±1.63} & 12.28±0.47 & 13.10±0.41 & 11.76±0.75 & - & \textbf{14.32±0.14} \\ 
50 & 12.55±0.49 & 12.86±0.36 & 12.47±0.89 & \textbf{13.18±2.00} & 11.76±0.38 & \textbf{14.12±1.00} & 12.72±0.54 & 12.88±0.54 & 12.04±0.68 & - & \textbf{15.09±0.25} \\ 
200 & 22.32±0.84 & \textbf{24.34±0.44} & 20.18±1.09 & 19.00±1.66 & 17.42±1.03 & 21.99±1.11 & 20.38±1.08 & 20.98±0.99 & 22.03±0.87 & - & \textbf{24.58±1.10} \\ 
400 & 36.24±1.24 & 40.42±0.67 & 35.13±0.40 & 28.74±2.78 & 28.88±1.47 & 29.58±0.55 & 33.01±1.36 & 33.06±1.82 & 34.01±0.18 & - & \textbf{46.90±1.56} \\ 
600 & 47.29±0.98 & 52.67±0.75 & 46.46±0.87 & 37.26±1.56 & 40.24±1.24 & 42.65±0.68 & 42.89±1.38 & 43.99±0.46 & 46.69±0.50 & - & \textbf{58.34±1.25} \\ 
800 & 56.32±0.50 & 62.96±0.51 & 56.48±1.33 & 47.47±2.46 & 49.74±1.42 & 50.62±2.03 & 53.17±1.64 & 46.45±1.14 & 55.27±1.05 & - & \textbf{65.90±0.97} \\ 
1,400 & 74.00±0.38 & 73.16±0.21 & 75.53±0.77 & 65.33±3.30 & 68.44±0.69 & 72.13±0.98 & 72.81±1.09 & 73.14±0.69 & 74.53±0.74 & \textbf{79.02±0.01} & 78.36±0.36 \\ 
2,600 & 83.24±0.48 & 81.94±0.08 & \textbf{86.56±0.24} & 83.33±0.96 & 83.83±0.39 & 85.80±0.31 & 84.72±0.54 & 84.83±0.59 & \textbf{86.33±0.13} & 83.76±0.00 & \textbf{86.31±0.26} \\ 
3,800 & 87.18±0.14 & 85.34±0.11 & \textbf{89.61±0.25} & 88.91±0.54 & 88.86±0.14 & \textbf{89.64±0.09} & 88.79±0.14 & 89.36±0.15 & 89.34±0.18 & 86.08±0.00 & 89.28±0.13 \\ 
5,000 & 88.79±0.15 & 87.58±0.20 & \textbf{91.72±0.15} & 90.93±0.14 & 91.24±0.17 & \textbf{91.55±0.04} & 91.13±0.11 & 91.28±0.12 & \textbf{91.75±0.17} & 87.58±0.00 & \textbf{91.50±0.08} \\ 
6,200 & 90.16±0.04 & 88.55±0.10 & \textbf{92.64±0.09} & 92.46±0.04 & 92.35±0.03 & \textbf{92.63±0.08} & \textbf{92.56±0.16} & \textbf{92.65±0.10} & \textbf{92.55±0.12} & 88.64±0.00 & \textbf{92.54±0.12} \\ 
7,400 & 90.99±0.11 & 89.71±0.08 & \textbf{93.56±0.05} & 93.26±0.10 & 93.02±0.14 & 93.33±0.05 & 93.38±0.07 & 93.38±0.11 & \textbf{93.43±0.13} & 89.56±0.00 & \textbf{93.47±0.09} \\ 
\bottomrule
\end{tabularx}
\begin{center} {SVHN dataset} \end{center}

\caption{Model accuracy for different sizes of labeled set $\sL$ and AL strategies.}
\label{tab:res1}
%\vspace{5cm}
\end{table*}

\subsection{Empirical evaluation of $\boldsymbol{\cS}$}
\label{app:comp_score_illust}

Fig.~\ref{app:fig:comp_score_illust} tracks the behavior of the competence score $\cS$ across 3 datasets. In all cases, the comparison of Figs.\ref{app:fig:comp_score_illust:a},\ref{app:fig:comp_score_illust:b} reveals that $\cS$ increases monotonically with competence as measured by accuracy. Interestingly, while the critical point where $\cS$ changes from favoring $\mS$ (small values) to favoring $\mF$ (large values) corresponds to an intermediate level of accuracy, this transition is delayed for the more difficult datasets. 

\begin{figure}[htb]
\center
\begin{subfigure}{.49\columnwidth}
\includegraphics[width=\linewidth]{results/competence_score_illustration/random_acc.png}
\caption{Accuracy with random sampling.}
\label{app:fig:comp_score_illust:a}
\end{subfigure}
\begin{subfigure}{.49\columnwidth}
\includegraphics[width=\linewidth]{results/competence_score_illustration/comp_score.png}
\caption{$\cS$ during \MethodName\ training.}
\label{app:fig:comp_score_illust:b}
\end{subfigure}
\caption{(a) Accuracy (mean and standard error) as a function of budget $b$, learning to classify CIFAR-100, ImageNet-50, and ImageNet-100 using a random sample of $b$ points. (b) The competence score $\cS$ as a function of budget. }

\label{app:fig:comp_score_illust}
\end{figure}

\subsection{Different embedding space}
\label{app:different_emb_space}
In Section~\ref{sec:ablation}, we present an ablation study where we repeat the basic fully-supervised experiments while varying the embedding employed by \MethodName. In Fig.~\ref{app:fig:BYOL_ssl_results} you can see the same ablation using BYOL \citep{grill2020bootstrap} representation.

\begin{figure}[htb]
\centering
\begin{subfigure}{.49\columnwidth}
  \centering
  \includegraphics[width=\linewidth]{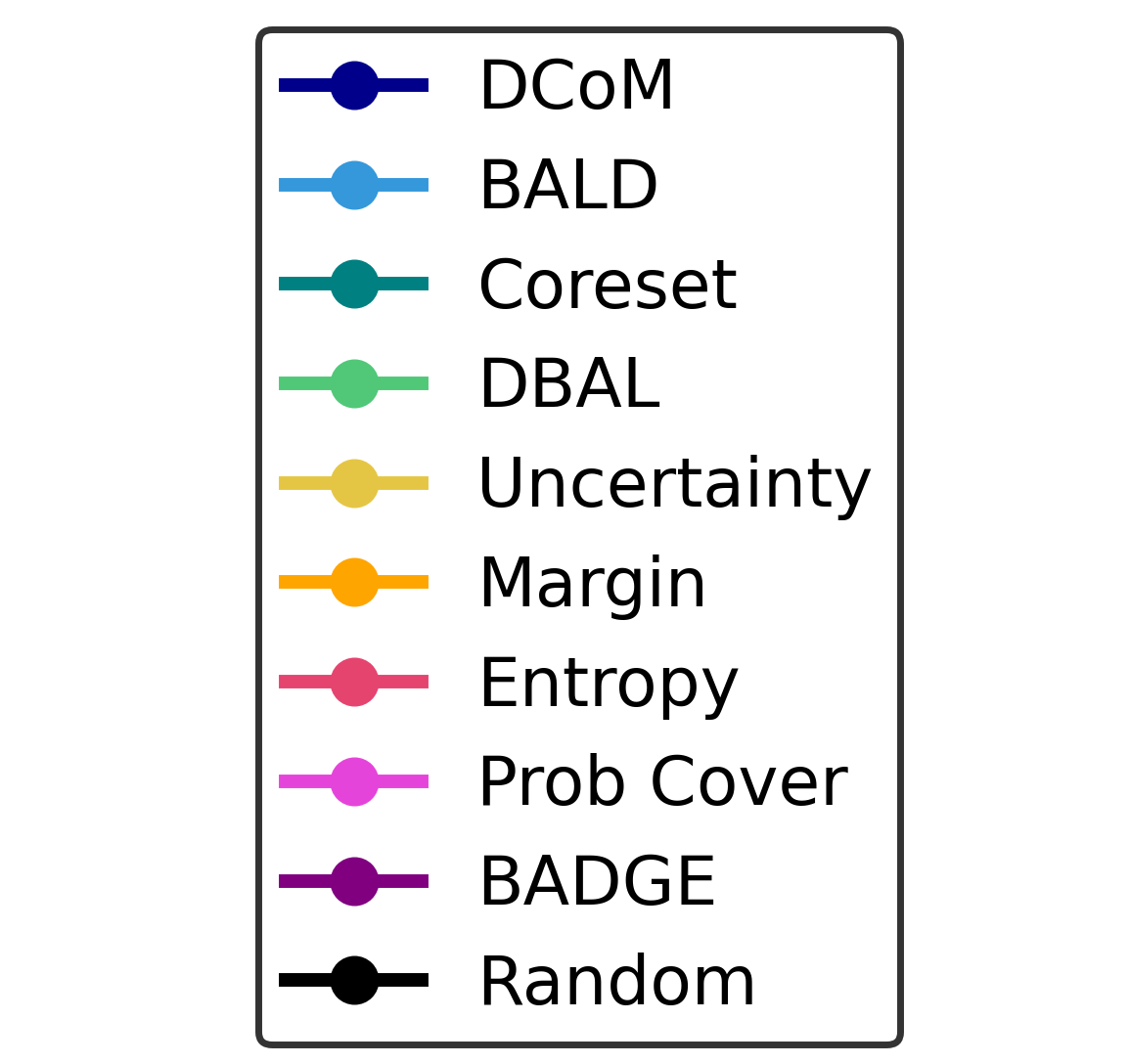}
\end{subfigure}
\begin{subfigure}{.49\columnwidth}
  \centering
  \includegraphics[width=\linewidth]{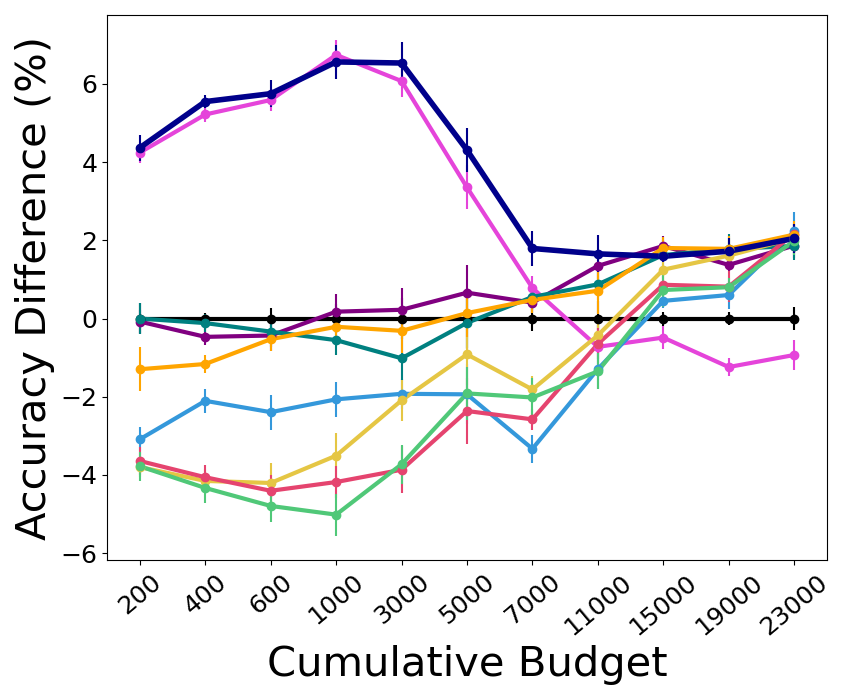}
\end{subfigure}
\caption{Performance over CIFAR-100, using BYOL feature space over \MethodName\ and \emph{ProbCover}, see details in the caption of Fig.~\ref{fig:main_results}. Clearly, \MethodName\ consistently achieves the best results.}
\label{app:fig:BYOL_ssl_results}
\end{figure}

%\clearpage
\subsection{Comparison per dataset}
\label{app:comp_per_dataset}
Tables~\ref{tab:res0}-\ref{tab:res1} present the empirical accuracy values of several datasets using different active learning algorithms. In each active step, a new learner is trained from scratch over all available labeled data, and the results are presented as the [mean $\pm$ STE] over $5-10$ repetitions (3 for ImageNet subsets). As mentioned earlier, the results for LDM-s algorithm were acquired from the study by \cite{cho2023querying}. Their lack of code provision suggests discrepancies in the running setup. Nonetheless, the disparity between their method and random selection in their setup is smaller than that observed in ours, implying the superiority of our method. The table includes the results for all datasets, with the name of each dataset below it.

%\section*{Acknowledgements (optional, unnumbered)}
%\inbal{todo}

\end{document}